\DeclareMathOperator*{\argmax}{arg\,max}
\theoremstyle{plain}
\newtheorem{theorem}{Theorem}[section]
\theoremstyle{definition}
\newtheorem{definition}[theorem]{Definition}
\newtheorem{claim}{Claim}
\theoremstyle{remark}
\newcommand{\TP}{\mathrm{TP}}
\newcommand{\TN}{\mathrm{TN}}
\newcommand{\FP}{\mathrm{FP}}
\newcommand{\FN}{\mathrm{FN}}
\newcommand{\FR}{\mathrm{FR}}
\newcommand{\TPR}{\mathrm{TPR}}
\newcommand{\PR}{\mathrm{Prec}}
\newcommand{\FPR}{\mathrm{FPR}}
\newcommand{\AUROC}{\mathrm{AUROC}}
\newcommand{\AUPRC}{\mathrm{AUPRC}}
\newcommand{\NP}{\mathrm{N}_{\mathrm{P}}}
\newcommand{\NN}{\mathrm{N}_{\mathrm{N}}}
\renewcommand{\vec}{\boldsymbol}
\newcommand{\R}{\mathbb{R}}
\newcommand{\N}{\mathbb{N}}
\newcommand{\abs}[1]{\left\lvert #1 \right \rvert}
\newcommand{\rv}{\mathsf}
\newcommand{\E}[2]{\mathbb{E}_{#1}\left[#2\right]}
\newcommand{\mat}{\boldsymbol}
\newcommand{\NPapersScanned}{1.5M\xspace}
\newcommand{\NPapersFound}{424\xspace}
\newcommand{\NPapersNoCite}{167\xspace}
\newcommand{\NPapersWithCite}{257\xspace}
\newcommand{\NPapersCiteInvalidSource}{135\xspace}
\newcommand{\NPapersCiteDavis}{144\xspace}
\declaretheorem[name=Lemma]{lemma}
\title{A Closer Look at AUROC and AUPRC \\ under Class Imbalance}
\author{%
  Matthew B.~McDermott \\
  Harvard Medical School\\
  \texttt{matthew\_mcdermott@hms.harvard.edu}
    \And
  Haoran~Zhang\\
  Massachusetts Institute of Technology\\
  \texttt{haoranz@mit.edu}
    \And
  Lasse Hyldig~Hansen\\
  Aarhus University\\
  \texttt{201908623@post.au.dk}
    \And
  Giovanni~Angelotti\\
  IRCCS Humanitas Research Hospital\\
  \texttt{giovanni.angelotti@humanitas.it}
    \And
  Jack~Gallifant\\
  Massachusetts Institute of Technology\\
  \texttt{jgally@mit.edu}}
\begin{document}

\maketitle

\begin{abstract}
In machine learning (ML), a widespread claim is that the area under the precision-recall curve (AUPRC) is a superior metric for model comparison to the area under the receiver operating characteristic (AUROC) for tasks with class imbalance. 
This paper refutes this notion on two fronts.
First, we theoretically characterize the behavior of AUROC and AUPRC in the presence of model mistakes, establishing clearly that AUPRC is not generally superior in cases of class imbalance. We further show that AUPRC can be a \emph{harmful metric} as it can unduly favor model improvements in subpopulations with more frequent positive labels, heightening algorithmic disparities. Next, we empirically support our theory using experiments on both semi-synthetic and real-world fairness datasets.
Prompted by these insights, we reviewed over 1.5 million scientific papers to understand the origin of this invalid claim--finding it is often made without citation, misattributed to papers that do not argue this point, and aggressively overgeneralized from source arguments.
Our findings represent a dual contribution: a significant technical advancement in understanding the relationship between AUROC and AUPRC and a stark warning about unchecked assumptions in the ML community.
\end{abstract}

\section{Introduction} \label{sec:introduction}
Machine learning (ML), especially in critical domains like healthcare, necessitates careful selection and application of evaluation metrics to guide appropriate model choices and understand performance nuances \cite{hicks2022evaluation}. Model evaluation can happen in one of two settings: (1) a \emph{methodological}/\emph{model comparison} setting, which occurs outside of a specific deployment setting and in which target model usage workflows, optimal decision thresholds, or specific false-positive (FP) and false-negative (FN) costs are typically not known, or (2) an \emph{application}/\emph{deployment} setting, where reasonably specific estimates of model usage workflows and FP/FN costs can be made. In both settings, appropriate metric choice is critical, as inappropriate selection can hinder innovation when used for model comparison and lead to significant real-world costs (e.g., misdiagnosis in a medical setting) in deployment settings.

This study focuses on two widely used metrics for binary classification tasks across both evaluation contexts: Area Under the Precision-Recall Curve (AUPRC) and Area Under the Receiver Operating Characteristic (AUROC). Central to this paper is the following key claim:
\begin{claim}%
    Let $f$ be a model which outputs continuous probabilistic predictions trained to solve a binary classification task for which the prevalence of negative labels is significantly higher than the prevalence of positive labels. For this problem, the AUPRC will yield a ``better'' or ``more accurate'' or ``fairer'' evaluation of $f$ than the AUROC.
    \label{key_claim}
\end{claim}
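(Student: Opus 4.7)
The plan is to begin by recognizing that Claim~1 is informal: the adjectives ``better,'' ``more accurate,'' and ``fairer'' admit many inequivalent operationalizations, so any attempt at proof must first fix a precise criterion under which the two metrics are compared. I would organize the analysis around three candidate formalizations: (i) sensitivity of the metric to the correction of a mis-ranked positive/negative pair, (ii) alignment of the metric with a decision-theoretic utility under class imbalance, and (iii) the metric's tendency to give equal credit for improvements on protected subgroups.

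For step (i), I would write each metric as an integral over thresholds and compute $\Delta\AUROC$ and $\Delta\AUPRC$ produced by a single rank swap between one positive and one negative. Because \AUROC\ is the probability that a random positive outranks a random negative, the change from a single swap is on the order of $1/(\NP \NN)$ and is essentially insensitive to \emph{where} in the ranked list the swap occurs. The \AUPRC, being an integral of $\TP/(\TP+\FP)$ over recall, has changes whose magnitude depends strongly on the location of the swap and grows under imbalance when the swap is near the top of the list. This calculation pins down a concrete sense in which \AUPRC\ is more \emph{sensitive} to improvements on rare positives; a proof of Claim~1 would then need to promote this sensitivity into genuine superiority for model comparison.

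For step (ii), I would try to identify a family of utility functions whose expected value, under class imbalance, is monotone in \AUPRC\ but not in \AUROC. I expect this step to yield a counterexample rather than a proof: \AUROC\ is, by construction, the probability of correctly ranking a positive above a negative and is prevalence-invariant, whereas \AUPRC\ entangles the ranker's discrimination with the base rate, so two models with identical ranking quality on datasets of different prevalence receive different \AUPRC\ values. For step (iii), I would split the population into subgroups of differing prevalence, decompose each group's \AUPRC\ into a prevalence-dependent baseline and a discrimination term, and compare how a uniform improvement in discrimination propagates across groups. I anticipate that the high-prevalence group's \AUPRC\ moves strictly more for the same ranking improvement, which would refute the ``fairer'' half of the claim.

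The main obstacle is not computational but definitional. Because Claim~1 is colloquial, any honest attempt to prove it will either quietly substitute a strawman operationalization or, as I expect in each of the three formalizations above, conclude the opposite. My plan is therefore to make the formalizations explicit, present the sensitivity calculation cleanly, and use the decision-theoretic and subgroup decompositions to exhibit the precise settings in which the apparent superiority of \AUPRC\ fails to survive formalization, so that what emerges is either a carefully-qualified partial proof or, more likely, a principled refutation.
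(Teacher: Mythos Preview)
Your plan is essentially aligned with the paper's own treatment of Claim~\ref{key_claim}: the paper also recognizes the claim as informal and ultimately \emph{refutes} it rather than proving it, and your steps (i) and (iii) land on exactly the paper's two main theorems. Step~(i)---computing $\Delta\AUROC$ and $\Delta\AUPRC$ for a single rank swap and showing the former is position-independent while the latter grows with the score of the swapped pair---is precisely Theorem~\ref{thm:mistake_order_differences}. Step~(iii)---the subgroup decomposition showing the high-prevalence group receives disproportionate credit under \AUPRC---is precisely Theorem~\ref{thm:theory_fairness}.

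Two substantive differences are worth flagging. First, the paper front-loads a unifying reparametrization (Theorem~\ref{thm:reparametrization}) that writes both metrics as $1$ minus an expectation of $\FPR$ over the positive-score distribution, with \AUPRC\ additionally weighting by the inverse firing rate $1/P(f(x)>t)$. This lemma is what makes the rank-swap calculation in step~(i) clean and immediate; without it you would still get there, but via a messier direct computation on the precision integral. Second, your step~(iii) omits the calibration hypothesis: the paper's fairness theorem requires $f$ to be perfectly calibrated on the low-prevalence subgroup so that, via a bound $\FR(f,\tau)\le P(y{=}1)/\tau$, low-prevalence samples provably vacate the high-score region in the limit. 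Without calibration (or some substitute tail control) the ``high-prevalence group's \AUPRC\ moves strictly more'' conclusion is not guaranteed, so you should build that assumption in explicitly. Your step~(ii), the decision-theoretic search for a utility monotone in \AUPRC\ but not \AUROC, has no direct analogue in the paper; the paper instead handles the ``when is \AUPRC\ actually preferable'' question qualitatively through deployment scenarios. Your route there is more ambitious and could yield a sharper negative result, but is not needed to reach the paper's conclusions.
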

Claim~\ref{key_claim} is made widely in both the scientific literature~\cite{wagner2023fully,MIME,hsu-etal-2020-characterizing,gong2021abusive}, in ML educational content~\cite{geron2022hands,he2013imbalanced}, and in popular press sources~\cite{czakon_f1_2022,mazzanti_why_2023}. It is so widespread that even basic search results for queries relating to AUROC and AUPRC\footnote{See \url{https://archive.is/qXPKu}, which shows results dominated by those making Claim~\ref{key_claim}} and large language model assistants like ChatGPT or Github Co-pilot will profess its veracity.\footnote{See \url{https://chat.openai.com/share/f8f7fddb-1553-41a5-976d-789e2f3a90d6}} Throughout these sources, it has been justified on numerous, often imprecise grounds (see Section~\ref{sec:lit_review}), but despite this extensive attention, \textit{we show in this work that this claim is, in fact, wrong, and may be dangerous from a model fairness perspective}; further, many of its justifications are \emph{invalid} or \emph{misapplied} in common ML settings. More specifically, we show the following:

\textbf{1) AUROC and AUPRC only differ with respect to model-dependent parameters in that AUROC weighs all false positives equally, whereas AUPRC weighs false positives at a threshold $\tau$ with the inverse of the model's likelihood of outputting any scores greater than $\tau$} (Theorem~\ref{thm:reparametrization}). This result shows that we can reason about the suitability of AUROC vs. AUPRC based on whether we care more about reducing false positives above low thresholds or high thresholds. In particular,

\textbf{2) AUROC favors model improvements uniformly over all positive samples, whereas AUPRC favors improvements for samples assigned higher scores over those assigned lower scores} (Theorem~\ref{thm:mistake_order_differences}). This indicates that \emph{the key factor differentiating the utility of AUROC or AUPRC as an evaluation metric is not class imbalance at all, but it is rather based on the target use case of the model in question.} See Figure~\ref{fig:mistakes} for a visual explanation. It also reveals that \emph{AUPRC can amplify algorithmic biases.} In particular, 

\textbf{3) AUPRC can unduly prioritize improvements to higher-prevalence subpopulations at the expense of lower-prevalence subpopulations}, raising serious fairness concerns in any multi-population use cases (Theorem~\ref{thm:theory_fairness}). 

In this work, we establish these three claims both theoretically and empirically via synthetic experiments and real-world validation on popular public fairness datasets. In addition, we demonstrate through an extensive, large-language model aided literature review of over 1.5 million scientific papers that Claim~\ref{key_claim} has been used to motivate numerous improper uses of AUPRC relative to AUROC across high-stakes domains like healthcare and in several established venues, including AAAI, NeurIPS, ICML, ICLR, Cancer Cell, Nature Journals, PNAS, and more. Through this paper, we hope to shed light on the nuances of appropriate evaluation and provide key guidance to limit future misuse of evaluation metrics in the scientific and machine learning communities.

\begin{figure*}
    \centering
     \includegraphics[width=0.95\linewidth]{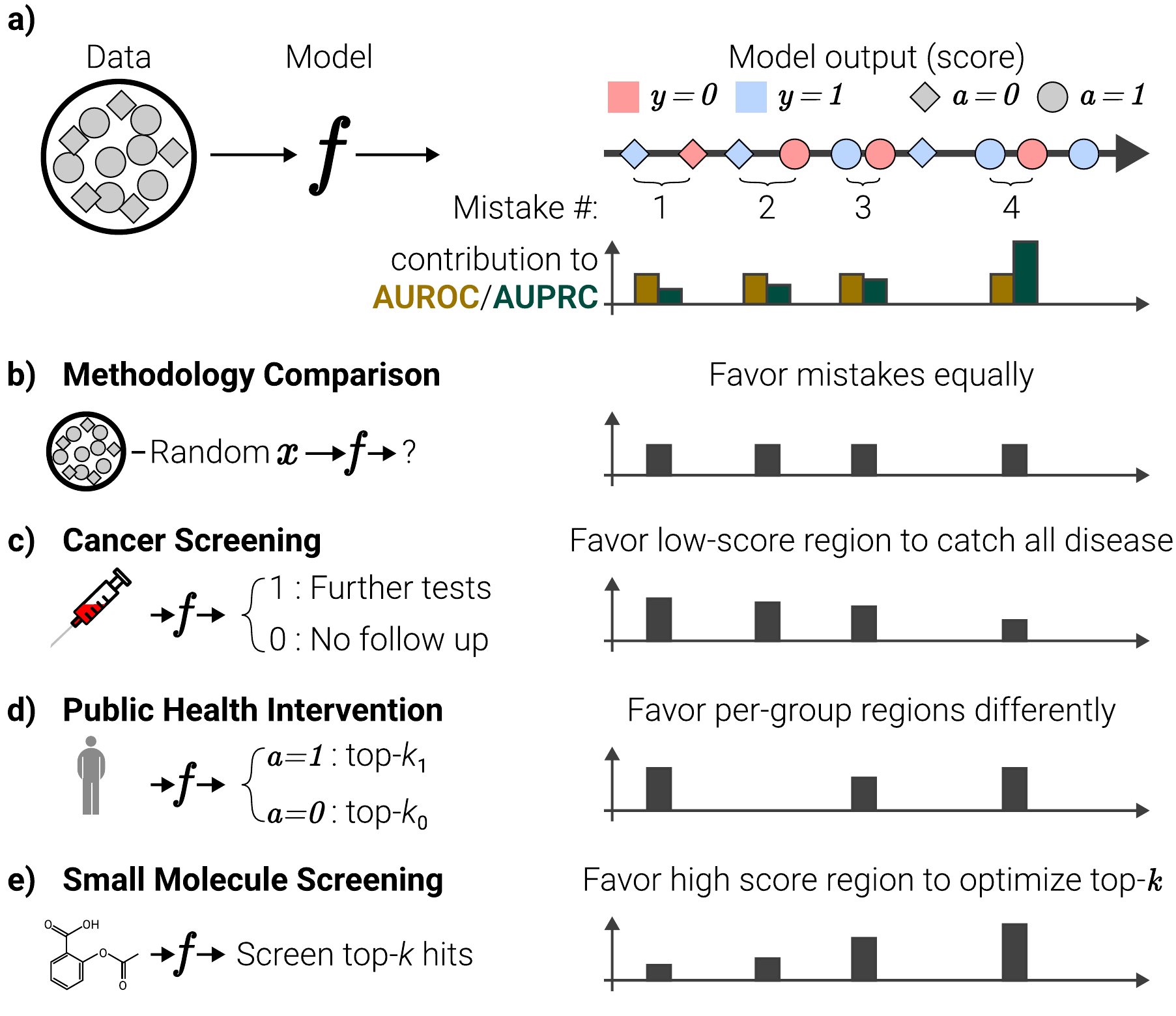}
    \caption{\small
    \textbf{a)} Consider a model $f$ yielding continuous output scores for a binary classification task applied to a dataset consisting of two distinct subpopulations, $\mathcal{A} \in \{0, 1\}$. If we order samples in ascending order of output score, each misordered pair of samples (e.g., mistake 1-4) represents an opportunity for model improvement. Theorem~\ref{thm:mistake_order_differences} shows that a model's AUROC will improve by the same amount no matter which mistake you fix, while the model's AUPRC will improve by an amount correlated with the score of the sample.
    \textbf{b)} When comparing models absent a specific deployment scenario, we have no reason to value improving one mistake over another, and model evaluation metrics should therefore improve equally regardless of which mistake is corrected.
    \textbf{c)} When false negatives have a high cost relative to false positives, evaluation metrics should favor mistakes that have \emph{lower scores}, regardless of any class imbalance.
    \textbf{d)} When limited resources will be distributed among a population according to model score, \emph{in a manner that requires certain subpopulations to all be offered commensurate possible benefit from the intervention for ethical reasons}, evaluation metrics should prioritize the importance of within-group, high-score mistakes such that the highest risk members of all subgroups receive interventions. 
    \textbf{e)} When false positives are expensive relative to false negatives and there are no fairness concerns, evaluation metrics should favor model improvements in decreasing order with score.
    }
    \label{fig:mistakes}
\end{figure*}

\section{Theoretical Analyses} \label{sec:theory}
Please note that all notation used is defined in Appendix Section~\ref{sec:notation}.

\subsection{Relationship between AUROC and AUPRC}
In this section, we introduce Theorem~\ref{thm:reparametrization}, which is as follows:

\begin{restatable}{thm}{aurocauprc}
Let $\mathcal X, \mathcal Y = \{0, 1\}$ represent a paired feature and binary classification label space from which i.i.d. samples $(x, y) \in \mathcal X \times \mathcal Y$ are drawn via the joint distribution over the random variables $\mathsf{x}, \mathsf{y}$. Let $f: \mathcal X \to (0, 1)$  be a binary classification model outputting continuous probability scores over this space. Then,
\begin{align*}
    \AUROC(f)
      &= 1 - \E{t \sim f(\rv x) | \rv y = 1}{\FPR(f, t)} \\
    \AUPRC(f)
      &= 1 - P_{\rv y}(y = 0)\E{t \sim f(\rv x) | \rv y = 1}{\frac{\FPR(f, t)}{P_{\rv x}(f(x)> t)}} 
\end{align*}
\label{thm:reparametrization}
\end{restatable}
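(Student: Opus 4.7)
The plan is to reduce both identities to integrals parametrized by the score threshold $t$ and then rewrite them as expectations over $t \sim f(\rv x) | \rv y = 1$. For the AUROC identity I would start from the standard probabilistic characterization $\AUROC(f) = P(f(\rv x_+) > f(\rv x_-))$, where $\rv x_+$ and $\rv x_-$ are independent draws from $\rv x | \rv y = 1$ and $\rv x | \rv y = 0$ respectively. Conditioning first on $t = f(\rv x_+)$, the inner probability is exactly $P(f(\rv x_-) < t) = 1 - \FPR(f, t)$, so taking an outer expectation over $t$ delivers the first claim.

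For AUPRC the plan is to start from the definition $\AUPRC(f) = \int_0^1 \PR(r)\,dr$, with $r$ denoting recall, and substitute $r = \TPR(f, t)$. Since $\TPR(f, t) = P_{\rv x}(f(x) > t | \rv y = 1)$ is the survival function of the positive-class score distribution, $dr = -p_{f(\rv x) | \rv y = 1}(t)\,dt$; the flipped integration limits (as $t$ runs from large to small, $r$ runs from $0$ to $1$) cancel the minus sign, giving the intermediate identity $\AUPRC(f) = \E{t \sim f(\rv x) | \rv y = 1}{\PR(f, t)}$.

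The final step is a Bayes-rule rewriting of the precision integrand. Using the total-probability decomposition $P_{\rv x}(f(x) > t) = P(\rv y = 1)\TPR(f, t) + P(\rv y = 0)\FPR(f, t)$ together with $\PR(f, t) = P(\rv y = 1)\TPR(f, t) / P_{\rv x}(f(x) > t)$, a quick subtraction yields the clean form $\PR(f, t) = 1 - P(\rv y = 0)\FPR(f, t) / P_{\rv x}(f(x) > t)$; substituting this into the expectation and pulling the constant $P(\rv y = 0)$ outside gives the AUPRC identity. The main technical obstacle is really the change-of-variables step for AUPRC: one must either assume $f(\rv x) | \rv y = 1$ admits a density (so that $d\TPR$ is literally $-p(t)\,dt$) or recast the argument in Riemann--Stieltjes / measure-theoretic form to accommodate atoms and ties in the score distribution, and one should check that $P_{\rv x}(f(x) > t) > 0$ almost surely under the positive-class score measure so that the AUPRC integrand is well-defined.
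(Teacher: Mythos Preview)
Your proposal is correct and essentially mirrors the paper's proof: both establish $\AUPRC(f)=\E{t\sim f(\rv x)|\rv y=1}{\PR(f,t)}$ via the change of variables $r=\TPR(f,t)$ and then rewrite $\PR(f,t)=1-P_{\rv y}(y=0)\,\FPR(f,t)/P_{\rv x}(f(x)>t)$ by Bayes' rule. The only difference is cosmetic: for AUROC the paper starts from the integral identity $\AUROC=1-\int_0^1\FPR\,d\TPR$ and applies the same change of variables, whereas you start from the probabilistic characterization $\AUROC=P(f(\rv x_+)>f(\rv x_-))$ and condition on $f(\rv x_+)$; these are two standard routes to the same one-line computation, and your version is arguably slightly more direct. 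Your closing remarks about needing a density (or a Stieltjes formulation) for the $d\TPR$ step and about $P_{\rv x}(f(x)>t)>0$ a.s.\ under the positive-score law are exactly the regularity caveats the paper handles implicitly by assuming continuous scores.
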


We provide the proof in Appendix Section~\ref{sec:reparametrization_proof}. The two key intuitions are that integrating over the TPR is equivalent to taking the expectation over the induced distribution of positive sample scores, and that via Bayes rule, $\PR(f, \tau) = 1-P_{\rv y}(y = 0) \frac{\FPR(f, \tau)}{P_{\rv x}(f(x) > \tau)}$.

Despite its simplicity, Theorem~\ref{thm:reparametrization} has far-reaching implications. Namely, it reveals that the only difference between AUROC and AUPRC with respect to model dependent parameters (i.e., omitting the dependence of AUPRC on the fixed prevalence of the dataset, which is not model varying) is that optimizing AUROC equates to minimizing the expected false positive rate over all positive samples in an unweighted manner (equivalently, in expectation over the distribution of positive sample scores) whereas optimizing AUPRC equates to minimizing the expected false positive rate over all positive samples weighted by the inverse of the model's ``firing rate'' ($P_{\rv x}(f(x) > \tau)$) at the given positive sample score. This preference can be crystallized when we examine how AUROC vs. AUPRC would prioritize correcting indivisible units of model improvements, termed ``mistakes'' which we will discuss next.

\subsection{AUPRC prioritizes high-score mistakes, AUROC treats all mistakes equally}
\label{sec:mistakes}
Understanding how a given evaluation metric prioritizes correcting various kinds of model mistakes or errors offers significant insight into when that metric should be used for optimization or model selection. To examine this topic for AUROC and AUPRC, consider the following definition of an ``incorrectly ranked adjacent pair'', which we will colloquially refer to as a ``model mistake'':
\begin{definition}
    Let $f, \mathcal X, \mathcal Y, \rv x, \rv y$ be defined as in Theorem~\ref{thm:reparametrization}. Further, let us suppose we have sampled a static dataset from $\rv x, \rv y$ for evaluation which will be denoted $\mat X, \vec y = \{(x_1, y_1), \ldots, (x_N, y_N)\}$, for $x_i \in \mathcal X, y_i \in \{0, 1\},$ and $N \in \N$. We assume for convenience that $f$ is an injective map and all $x_i$ are distinct (i.e.,  $\forall (i,j) | i\neq j: x_i \neq x_j$ which, by injectivity of $f$, implies that $f(x_i) \neq f(x_j)$).

    We say that $(x_i, x_j)$ are an \emph{incorrectly ranked adjacent pair} and thus that the model makes a ``\emph{mistake}'' at samples $(x_i, x_j)$ if:
    \begin{enumerate}[nosep]
        \item $y_i = 1$ and $y_j = 0$
        \item $f(x_i) < f(x_j)$
        \item $\nexists x_k$ such that $f(x_i) < f(x_k) < f(x_j)$. 
    \end{enumerate}
    \label{def:mistake}
\end{definition}

Essentially, Definition~\ref{def:mistake} states that a \emph{mistake} occurs when a model assigns adjacent probability scores to a pair of samples with discordant labels, as shown in Figure~\ref{fig:mistakes}. With this in mind, we can then introduce Theorem~\ref{thm:mistake_order_differences} which states that AUROC improves by a constant amount regardless of which mistake is corrected for a given model and dataset whereas AUPRC improves more when the mistake corrected occurs at a higher score than when it occurs at a lower score:
\begin{restatable}{thm}{mistakeordering}
    Define $f, \mathcal X, \mat X, \vec y$ and $N$ as in Definition~\ref{def:mistake}. Further, suppose without loss of generality that the dataset $\mat X$ is ordered such that $f(x_i) < f(x_{i+1})$ for all $i$. 
    Then, let us define $M = \{i | (x_i, x_{i+1})\text{ is an \emph{incorrectly ranked adjacent pair} for model $f$}\}$. Define $f_i'$ to be a model that is identical to $f$ except that the probabilities assigned to $x_i$ and $x_{i+1}$ are swapped:
    \[
    f_i':x \mapsto \begin{cases} f(x) & \text{if }x \notin \{x_i, x_{i+1}\} \\
                        f(x_{i+1}) & \text{if } x = x_i \\
                        f(x_{i}) & \text{if } x = x_{i+1}. \\
          \end{cases}
    \]
    Then, $\AUROC(f_i') = \AUROC(f_j')$ for all $i, j\in M$, and $\AUPRC(f_i') < \AUPRC(f_j')$ for all $i, j \in M$ such that $i < j$.
    \label{thm:mistake_order_differences}
\end{restatable}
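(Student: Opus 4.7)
The plan is to handle the AUROC and AUPRC halves separately, since the AUROC claim reduces to a short combinatorial observation while the AUPRC claim requires unpacking Theorem~\ref{thm:reparametrization} and a more careful bookkeeping argument.

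For AUROC, I would use the Wilcoxon--Mann--Whitney identity
\[
\AUROC(f) \;=\; \frac{1}{N_P N_N}\sum_{a:\,y_a=1}\sum_{b:\,y_b=0}\mathbf{1}\bigl[f(x_a) > f(x_b)\bigr],
\]
and ask which indicators can possibly change when $f$ is replaced by $f_i'$. Since only the scores of $x_i$ and $x_{i+1}$ differ, the only candidate pairs involve one of these two samples. For any other negative $x_b$ the adjacency (no-sample-in-between) clause in Definition~\ref{def:mistake} forces either $f(x_b) < f(x_i)$ or $f(x_b) > f(x_{i+1})$, so the indicator for $(x_i, x_b)$ is unaffected by the swap; a symmetric argument covers other positives against $x_{i+1}$. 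Thus the only indicator that flips is the one for the mistake pair itself, which goes from $0$ to $1$, and hence $\AUROC(f_i') - \AUROC(f) = \frac{1}{N_P N_N}$ regardless of $i \in M$.

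For AUPRC, I would use Theorem~\ref{thm:reparametrization} in its finite-sample form,
\[
\AUPRC(f) \;=\; \frac{1}{N_P}\sum_{\ell=1}^{N_P}\PR\bigl(f, f(x_{i_\ell})\bigr),
\]
where $i_1 < \cdots < i_{N_P}$ are the ranks of the positives in ascending score order. The swap at mistake $i = i_\ell$ moves the $\ell$-th positive from rank $i_\ell$ to rank $i_\ell + 1$ and leaves every other positive's rank (and hence summand) fixed. Writing $\PR$ explicitly as the ratio of positives-above to samples-above at that threshold and subtracting, the change collapses to a single closed-form term of shape $\frac{c_i}{N_P\,(N-i)(N-i+1)}$, where $c_i$ counts the positives at or above position $i$.

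The main obstacle is showing that this improvement is strictly monotone in $i$ across all of $M$. The denominator $(N-i)(N-i+1)$ strictly decreases with $i$, which by itself would yield monotonicity, but the numerator $c_i$ can also shrink as $i$ crosses intervening positives, so the comparison is not automatic. My plan is to reduce to consecutive mistakes $i < j$ in $M$, carefully track the labels at ranks $i+1,\dots,j$, and verify that each intervening positive changes $c$ by an amount too small to reverse the denominator's shrinkage; if that direct approach proves stubborn, I would instead rewrite $\Delta\AUPRC_i$ as a telescoping sum of precision differences on the PR staircase and argue monotonicity from that geometric picture. I expect this monotonicity step to be the real content of the proof, with the AUROC half essentially a warm-up.
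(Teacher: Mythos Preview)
Your AUROC argument via the Wilcoxon--Mann--Whitney identity is correct; the paper instead invokes Theorem~\ref{thm:reparametrization} and tracks at which positive-score thresholds the false-positive count changes, but both routes reduce to the same one-pair-flips observation.

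For AUPRC you have correctly isolated the crux, and in fact gone further than the paper does: the paper's proof simply notes that the false-positive count drops by one only at the threshold $f(x_i)$ and concludes that the improvement scales with $1/\FR(f(x_i))$, without ever engaging with the competing effect you flag (that the numerator $c_i$ of the changed precision summand also drops as $i$ passes intervening positives). However, your plan to ``verify that each intervening positive changes $c$ by an amount too small to reverse the denominator's shrinkage'' cannot succeed in full generality. Take $N=8$ with labels $(1,0,1,0,0,0,0,0)$ in ascending score order, so $M=\{1,3\}$. Under the standard average-precision convention one computes $\AUPRC(f'_1)=19/84\approx 0.2262$ versus $\AUPRC(f'_3)=9/40=0.225$; under the paper's strict-inequality convention your closed form becomes $\Delta_i=(c_i-1)/\bigl(N_P(N-i)(N-i-1)\bigr)$, giving $\Delta_3=0<\Delta_1$ outright since $c_3-1=0$. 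The strict pointwise inequality $\AUPRC(f'_i)<\AUPRC(f'_j)$ for all $i<j$ in $M$ therefore fails whenever the top positive is trailed by enough negatives, so neither your proposed verification nor the paper's firing-rate heuristic establishes the statement as written; a rigorous version would need either an added hypothesis ruling out such configurations or a weakening to a trend/correlation claim.
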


The proof for Theorem~\ref{thm:mistake_order_differences} can be found in Appendix~\ref{sec:mistake_ordering_proof}. This proof simply stems from the fact that correcting a single mistake $(x_i, x_j)$ (as defined in Definition~\ref{def:mistake}) always changes the false positive rate by the same amount, and only changes it at the threshold $f(x_i)$. This, combined with the formalization of AUROC and AUPRC in Theorem~\ref{thm:reparametrization}, establishes the proof. Note that this Theorem can be trivially extended to include a case where ties are possible simply by noting that ``swapping'' two samples $x_i$ and $x_j$ where $f(x_i) = f(x_j)$ in the manner of the theorem results in no change to either AUROC or AUPRC, and similarly by the same reasoning separating any tie in the appropriate direction will improve AUROC uniformly over samples and will improve AUPRC in a manner monotonic with model score.

\subsection{AUPRC is explicitly discriminatory in favor of high-scoring subpopulations}
The reliance on a model's firing rate revealed in Theorem~\ref{thm:reparametrization} and the optimization behavior in Theorem~\ref{thm:mistake_order_differences} reveals significant issues with the fairness of AUPRC. In particular, in this section we introduce Theorem~\ref{thm:theory_fairness}:

\begin{restatable}{thm}{fairness}
    Let $f, \mathcal X, \mat X, \vec y, N, M,$ and $f'_j$ all be defined as in Theorem~\ref{thm:mistake_order_differences}.
    Further, suppose that in this setting the domain $\mathcal X$ now contains an attribute defining two subgroups, $\mathcal{A} = \{0, 1\}$, such that for any sample $(x_i, y_i)$, $a_i$ denotes the subgroup to which that sample belongs. Let $f$ be perfectly calibrated for samples in subgroup $a = 0$, such that $P_{\rv y | \rv a, \rv x}(y = 1 | a = 0, f(x) = t) = t$.
    Then,
    \[
    \lim_{P_{\rv y | \rv a}(y = 1 | a = 0) \to 0} P\left(a_i = a_{i+1} = 1 \middle\vert i = \argmax_{j \in M} \left(\AUPRC(f_j')\right)\right) = 1.
    \]
\label{thm:theory_fairness}
\end{restatable}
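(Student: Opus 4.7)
The plan is to combine Theorem~\ref{thm:mistake_order_differences} with a concentration argument on subgroup $0$'s scores implied by perfect calibration. First, I would invoke Theorem~\ref{thm:mistake_order_differences} to observe that $\AUPRC(f'_j)$ is strictly increasing in $j \in M$, so $\argmax_{j \in M} \AUPRC(f'_j) = \max M$ is the rightmost incorrectly ranked adjacent pair under the ascending-score ordering. The task then reduces to showing that, as $\pi_0 := P(\rv y = 1 \mid \rv a = 0) \to 0$, both $x_{\max M}$ and $x_{\max M + 1}$ lie in subgroup $1$ with probability tending to $1$.

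Next, I would use the calibration hypothesis to push subgroup $0$'s scores toward $0$. Calibration gives $P(\rv y = 1 \mid \rv a = 0, f(\rv x)) = f(\rv x)$, so by the tower property $\mathbb{E}[f(\rv x) \mid \rv a = 0] = P(\rv y = 1 \mid \rv a = 0) = \pi_0$. Markov's inequality then yields $P(f(\rv x) > \sqrt{\pi_0} \mid \rv a = 0) \leq \sqrt{\pi_0}$, and a union bound over the at most $N$ subgroup $0$ samples shows that the event $E_{\pi_0}$ on which every subgroup $0$ sample has score at most $\sqrt{\pi_0}$ holds with probability at least $1 - N\sqrt{\pi_0} \to 1$.

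On $E_{\pi_0}$, any mistake $(x_j, x_{j+1})$ with $f(x_j) > \sqrt{\pi_0}$ is forced to have $a_j = a_{j+1} = 1$: $x_j$ cannot belong to subgroup $0$ since its score exceeds $\sqrt{\pi_0}$, and neither can $x_{j+1}$, whose score is strictly larger. So the remaining task is to show $f(x_{\max M}) > \sqrt{\pi_0}$ with probability tending to $1$ — equivalently, that subgroup $1$ supplies at least one mistake with positive-sample score above $\sqrt{\pi_0}$.

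The hard part will be this final step, since the theorem as stated imposes no explicit condition on subgroup $1$. My strategy is to invoke a mild non-triviality assumption on subgroup $1$ — for instance, that some subgroup-$1$ mistake exists at a score bounded below by some fixed $\delta > 0$ with probability bounded away from $0$ — so that, for all $\pi_0$ sufficiently small, $\sqrt{\pi_0} < \delta$ and such a mistake exists with probability approaching $1$. Without such a hypothesis the limit could fail (for example if subgroup $1$ were perfectly ranked on every sampled dataset), so this non-degeneracy condition is the genuine technical crux of the argument.
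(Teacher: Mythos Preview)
Your proposal follows the same three-step skeleton as the paper: (i) invoke Theorem~\ref{thm:mistake_order_differences} to identify $\argmax_{j\in M}\AUPRC(f'_j)=\max M$; (ii) use calibration on subgroup $0$ to push its scores toward zero; (iii) assume subgroup $1$ supplies a mistake above some fixed positive threshold. The main difference is in step (ii): the paper derives the tail bound via two density lemmas---first relating $p_+$ and $p_-$ under calibration, then integrating to obtain $\FR(f,\tau)\le P_{\rv y}(y=1)/\tau$---whereas you get the identical inequality in one line from the tower property $\mathbb{E}[f(\rv x)\mid \rv a=0]=\pi_0$ and Markov. Your route is more elementary and avoids any assumption that score densities exist; the paper's lemmas buy nothing extra here. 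On step (iii) you are exactly right that the theorem statement imposes no hypothesis on subgroup $1$: the paper's proof handles this the same way you propose, by simply positing mid-proof a fixed threshold $\tau$ and a $\delta\in(0,1]$ such that a subgroup-$1$ mistake above $\tau$ occurs with probability at least $\delta$, and treating this as constant in the limit. Your flagging of this as the genuine crux---and your observation that without it the conclusion can fail---is accurate and matches a gap the paper's own argument leaves implicit.
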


Essentially, Theorem~\ref{thm:theory_fairness} (proof provided in Appendix~\ref{sec:fairness_proof}) shows the following. Suppose we are training a model $f$ over a dataset with two subpopulations: Population $a=0$ and $a=1$. If the model $f$ is calibrated and the rate at which $y=1$ for population $a=0$ is sufficiently low relative to the rate at which $y=1$ for population $a=1$, then the mistake that, were it fixed, would maximally improve the AUPRC of $f$ will be a mistake purely in population $a=1$. This demonstrates that AUPRC provably favors higher prevalence subpopulations (those with a higher base rate at which $y=1$) under sufficiently severe prevalence imbalance between subpopulations.

Note that this property is, generally speaking, not desirable. \emph{In particular, this property establishes that in settings where model fairness among a set of subpopulations in the data is important, AUPRC should not be used as an evaluation metric due to the risk that it will introduce biases in favor of the highest prevalence subpopulations.} We validate this result empirically over both synthetic and real-world data in Section~\ref{sec:experiments}, demonstrating that the import of Theorem~\ref{thm:theory_fairness} is not merely limited to an analytical curiosity but can have real-world impact on algorithmic disparities in practice. Furthermore, note that this theorem does not indicate that AUPRC will be superior to AUROC for \emph{differentiating} a low prevalence (or low risk) subpopulation relative to a high-risk subpopulation, a property that is sometimes attributed to AUPRC in the literature. Rather, Theorem~\ref{thm:theory_fairness} shows that maximizing AUPRC will be more likely to optimize solely within the high-risk subgroup, rather than optimizing to differentiate across subgroups, as low-risk subgroup samples will predominantly occur in lower-score regions under severe class imbalance.

\section{Experimental Validation} \label{sec:experiments}
In this section, we establish via synthetic and real-world experiments that Theorem~\ref{thm:theory_fairness} is not merely an analytical effect but has real world consequences on the implications of optimizing or performing model selection via AUPRC.

\subsection{Synthetic optimization experiments demonstrate AUPRC-induced disparities}
\label{subsec:synthetic_exp}

In this section, we use a carefully constructed synthetic optimization procedure to demonstrate that, when all other factors are equal, optimizing by or performing model selection on the basis of AUPRC vs. AUROC risks excacerbating algorithmic disparities in the manner predicted by Theorem~\ref{thm:theory_fairness}. For analyses under more realistic conditions with more standard models, see our real-world experiments in Section~\ref{sec:real_world_exps}.

\paragraph{Experimental Setup.} Let $y \in \{0, 1\}$ be the binary label, $s \in [0, 1]$  be the predicted score, and $a \in \{1, 2\}$ be the subpopulation. We fix $P_{\rv y | \rv a}(y=1|a=1) = 0.05$ and $P_{\rv y | \rv a}(y=1|a=2) = 0.01$. We sample a dataset for each group $\mathcal{D}_a = \{(s_1, y_1), ..., (s_{n_a}, y_{n_a})\}$, such that $\text{AUROC}(\mathcal{D}_1) \approx \text{AUROC}(\mathcal{D}_2) \approx \text{AUROC}(\mathcal{D}_1 \cup \mathcal{D}_2) = 0.85$ (See Appendix~\ref{sec:auroc_procedure}; A target AUROC of 0.65 was also profiled in Appendix Figure~\ref{fig:full_optimization_experiment_065}).

Our main experimental challenge is to determine how to simulate ``optimizing'' or ``selecting'' a model by AUROC or AUPRC. Simulating optimizing by these metrics allows us to explicitly assess how the use of either AUPRC or AUROC as an evaluation metric in model selection processes such as hyperparameter tuning or architecture search, can translate into model-induced inequities in dangerous ways.
We explore two approaches here.
First, we can simply correct the atomic mistake that maximally improves AUROC or AUPRC in each optimization iteration. In our experiments, we use $n_1 = n_2 = 200$ and optimize for $50$ steps for this experiment. This is the most straightforward optimization procedure to analyze, but it is unrealistic. In real optimization scenarios, larger model changes will be made at once, and a model will have an opportunity to \emph{degrade} performance in some regions in order to improve it in others. 

Next, we profile an optimization procedure that randomly permutes all the (sorted) model scores up to 3 positions (See Appendix~\ref{sec:optimization_details} for details). This has the effect of randomly adjusting all model scores, and can worsen model performance under some random permutations, but offers precisely the same ``optimization capacity'' to the low and high prevalence subgroups. To ensure the model is under some optimization constraint (and therefore does not always find the ``perfect'' permutation to maximize both metrics identically), we allow the model to sample only 15 possible permutations before choosing the best option. This means the system will be forced to navigate optimization trade-offs between which permutations improve the right regions of the score most effectively among its limited set. We use $n_1 = n_2 = 100$ for these experiments and optimize for $25$ total steps.

\begin{figure}[h!]
    \centering
    \includegraphics[width=0.8\linewidth]{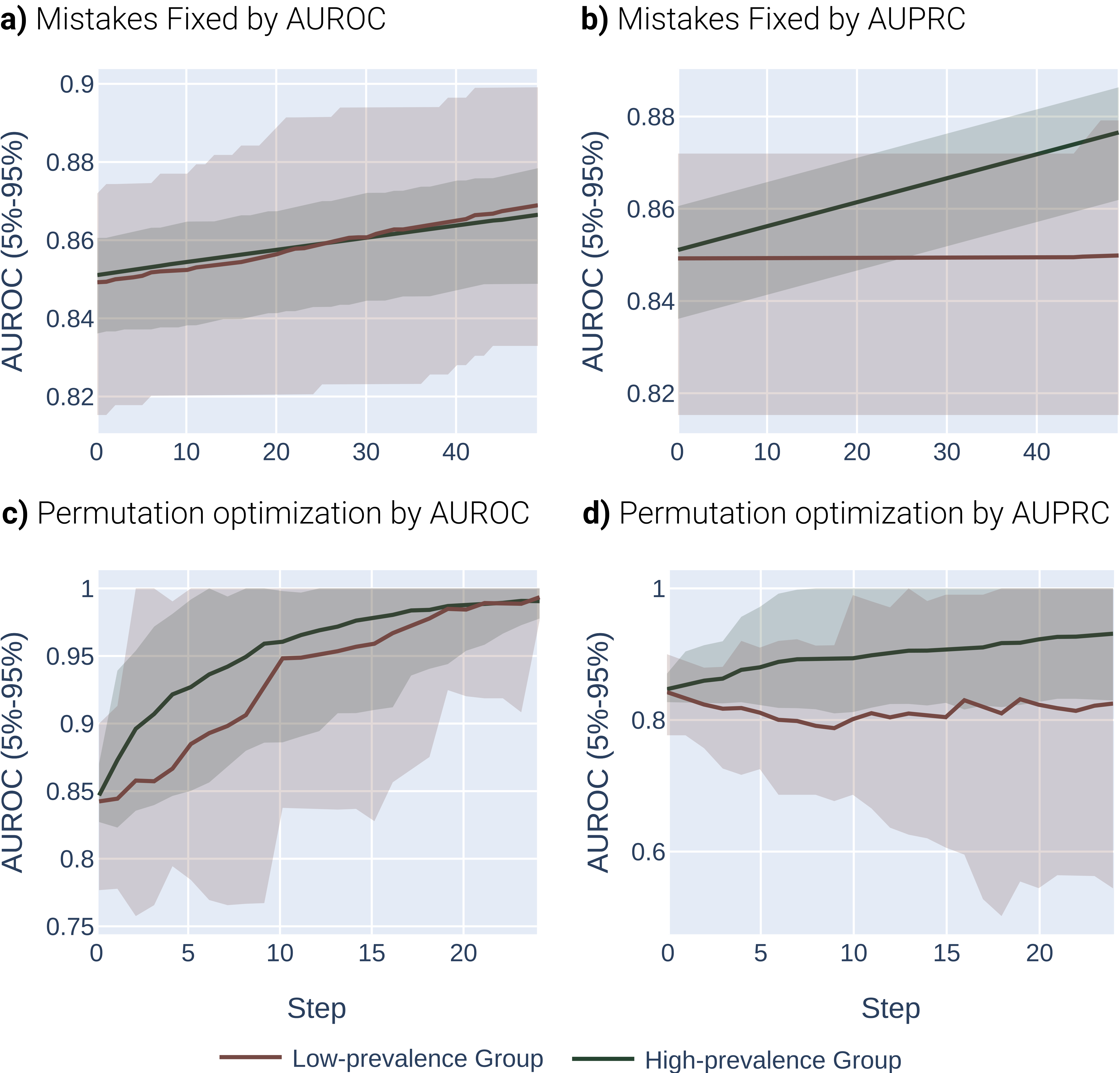}
    \caption{Synthetic experiment per-group AUROC, showing a confidence interval spanning the 5th to 95th percentile of results observed across all seeds, after successively either fixing individual mistakes, as defined in Definition~\ref{def:mistake}, (\textbf{a)} and \textbf{b)}) or successively choosing the optimal score permutation (\textbf{c)} and \textbf{d)}) in order to optimize either AUROC (\textbf{a)} and \textbf{c)}) or AUPRC (\textbf{b)} and \textbf{d)}). It is clear across both forms of optimization that AUPRC definitively favors the higher prevalence subpopulation, whereas AUROC treats subgroups approximately equally. Similar patterns were observed when comparing per-group AUPRCs over the same experimental procedures, as shown in Appendix Figure~\ref{fig:full_optimization_experiment}.
    }    \label{fig:main_optimization_experiment}
\end{figure}

Across both settings, we run these experiments across 20 randomly sampled datasets and show the mean and an empirical 90\% confidence interval around the mean in Figure~\ref{fig:main_optimization_experiment}. We present a formal mathematical formulation of these perturbations, as well as profile a third random perturbation method, in Appendix \ref{sec:optimization_details}.

\paragraph{Results.} %

Our results demonstrate the impact of the optimization metric on subpopulation disparity. In particular, in Figure \ref{fig:main_optimization_experiment}, we observe a notable disparity introduced when optimizing under the AUPRC metric regardless of the optimization procedure. This is evident in the performance metrics across the high and low prevalence subpopulations, which exhibit significant divergence as the optimization process favors the group with higher prevalence. In the more realistic, random-permutation optimization procedure (Figure~\ref{fig:main_optimization_experiment}d), this even results in a decrease in the AUROC for the low prevalence subgroup. In comparison, when optimizing for overall AUROC, the AUROCs of both groups increase together. Note that we show the effect of this optimization on the AUPRC metric, which shows very similar trends, in Appendix Figure~\ref{fig:full_optimization_experiment}. These results demonstrate explicitly that not only does optimizing for AUPRC differ greatly than for AUROC, as has been noted historically by researchers developing explicit AUPRC optimization schemes~\cite{rebuttal_paper}, but it in fact does so in an explicitly discriminatory way in realistic scenarios.

\subsection{Real-world experimental validation}
\label{sec:real_world_exps}
To demonstrate the generalizability of our findings to the real world, we evaluate fairness gaps induced by AUROC and AUPRC selection on four common datasets in the fairness literature \cite{zhang2018mitigating, fabris2022algorithmic, lahoti2020fairness}.  

\paragraph{Datasets.} We use the following four tabular binary classification datasets: \texttt{adult} \cite{asuncion2007uci}, \texttt{compas} \cite{angwin2022machine}, \texttt{lsac} \cite{wightman1998lsac}, and \texttt{mimic} \cite{johnson2016mimic}. In each dataset, we consider both sex and race as sensitive attributes. To mimic the setting of our theorems, we balance each dataset by the sensitive attribute during both training and test, by randomly subsampling the majority group. Further details about each dataset, as well as preprocessing steps, can be found in Appendix \ref{sec:add_real_world}.

\paragraph{Experimental setup.} We train XGBoost models \cite{chen2016xgboost} on each dataset. For each task, we iterate over a grid of per-group weights in order to create a diverse set of models that favor different groups. For each setting of task and per-group weight, we conduct a random hyperparameter search \cite{bergstra2012random} with 50 runs.  Though more complex hyperparameter search methods such as BOHB~\cite{falkner2018bohb} or TPE~\cite{bergstra2011algorithms} might lead to better performance, random searches are far more popular and practical, and have been used in popular benchmarking libraries~\cite{gulrajani2020search, sagawa2021extending}. 

We evaluate the validation set overall AUROC and AUPRC. We also evaluate the test set AUROC gap and AUPRC gap between groups, where gaps are defined as the value of the metric for the higher prevalence group minus the value for the lower prevalence group. Based on our theorems, our hypothesis is that overall AUPRC should be more positively correlated with the signed AUROC gap than overall AUROC, indicating that it better favors the higher prevalence group, especially when the prevalence ratio between groups is high. To test this hypothesis, we evaluate the Spearman correlation coefficient between these quantities. We repeat this experiment 20 times, with different random data splits, to obtain a 95\% confidence interval.

\begin{figure}
\vspace{-5mm}
    \centering
    \includegraphics[width=.7\linewidth]{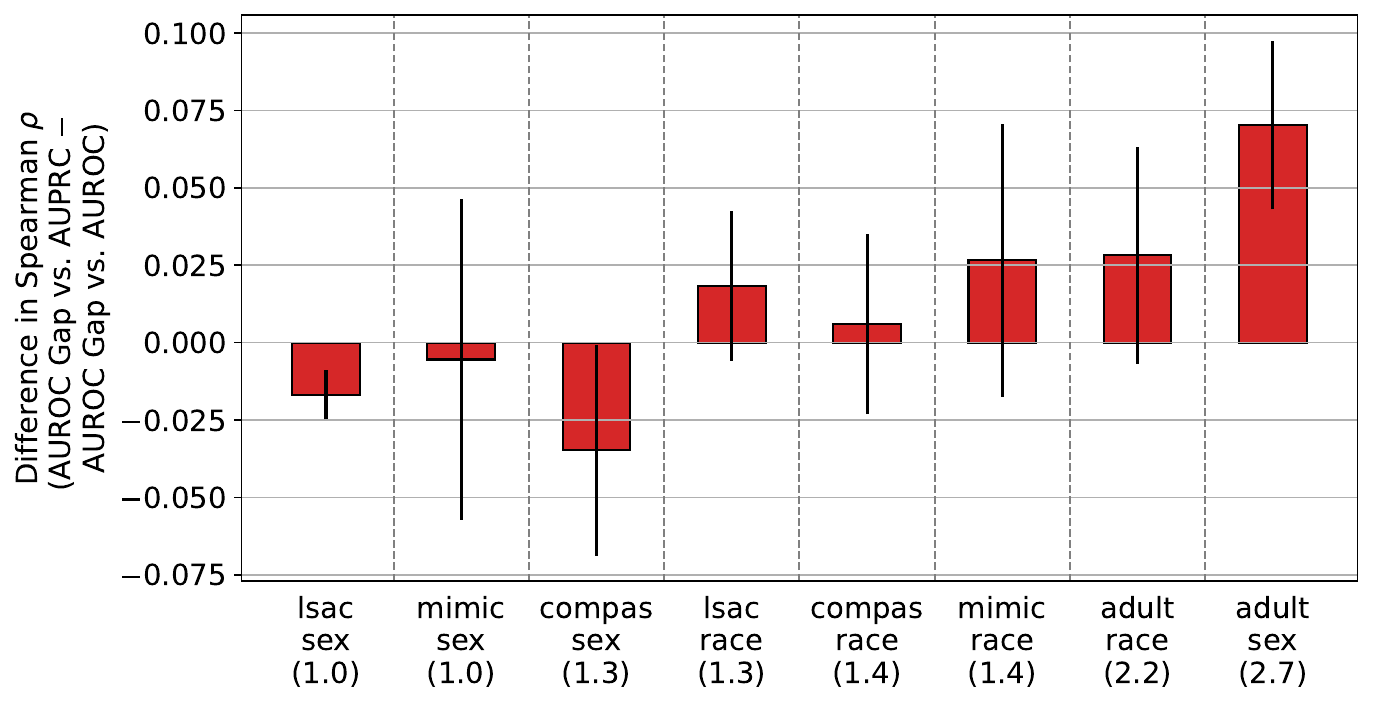}
    \caption{Difference in the Spearman's $\rho$ between the test-set signed AUROC gap versus the validation set overall AUPRC, and the AUROC gap versus the overall AUROC. Numbers in parentheses are the prevalence ratios between the two groups for the particular attribute, and datasets are sorted by this quantity. Error bars are 95\% confidence intervals from 20 different random data splits. }
    \label{fig:real_world_res}
\end{figure}

\paragraph{Results.} In Figure \ref{fig:real_world_res}, we plot the difference in the Spearman correlation coefficient of the AUROC gap versus the overall AUPRC, and AUROC gap versus overall AUROC. We observe mixed results in datasets with low prevalence ratio.
In dataset with higher prevalence ratio, we find that overall AUPRC is more positively correlated with the AUROC gap than overall AUROC, indicating that AUPRC more aggressively favors the higher prevalence group. We emphasize that the prevalence ratios observed in these real-world datasets is much lower than the ratio of 5 used in our synthetic experiments, which may account for the mild effect observed. 
To see raw results from these experiments, see Appendix Figure \ref{fig:app_real_world_res}. Similar results for neural network classifiers can be found in Appendix Section \ref{subsec:add_real_results}.

Next, in Appendix Figure \ref{fig:app_spearman_vs_prev_ratio}, we plot the difference in the Spearman's $\rho$ from Figure \ref{fig:real_world_res}, versus the prevalence gap. We find that there is a statistically significant correlation between the two (Spearman's $\rho$ = 0.905, p = 0.002). Thus, while our power to detect a prevalence mediated AUPRC bias amplification effect is limited due to the limited prevalence disparities in these datasets, we nonetheless observe a strong positive correlation between the extent of the prevalence mismatch between the low and high prevalence group and the amount that AUPRC favors the high prevalence group over AUROC. \textit{In other words, our results show that across these fairness datasets and attributes, as the prevalence disparity grows more extreme, we observe a statistically significant corresponding increase in the extent to which AUPRC introduces algorithmic bias, exactly in accordance with what Theorem~\ref{thm:theory_fairness} suggests.}

\section{When \emph{Should} One Use AUPRC vs. AUROC?} \label{sec:examples}
In Sections~\ref{sec:theory} and \ref{sec:experiments}, we have shown that AUPRC is not universally superior in cases of class imbalance (and that instead, it merely preferentially optimizes high-score regions over low-score regions) and that it also poses serious risks to the model fairness in settings where subgroup prevalences differ. In light of this, how should we revise Claim~\ref{key_claim} to reflect when we \emph{actually} should use AUPRC instead of AUROC or vice versa? Below, we explore this question and provide practical guidance on metric selection for binary classification, building on our theoretical results and highlight specific contexts in which one may be favorable (Figure \ref{fig:mistakes}). Note that while we provide guidance below on situations in which AUROC vs. AUPRC is more or less favorable, this is not to suggest that authors should not report both metrics, or even larger sets of metrics or more nuanced analyses such as ROC or PR curves; rather this section is intended to offer guidance on what metrics should be seen as more or less appropriate for use in things like model selection, hyperparameter tuning, or being highlighted as the `critical' metric in a given problem scenario.

\paragraph{For context-independent model evaluation, use AUROC:} For model evaluations conducted outside of specific deployment contexts, where the differential costs of errors are undefined, the necessity for a metric that impartially values improvements across the entire model output space becomes paramount (Figure \ref{fig:mistakes}a). As it is not known in advance where samples of interest will live in the output space, nor are particular cost ratios known, there should be no preference for mistake correction. Therefore, in this setting, AUROC is favorable as it uniformly accounts for every correction, offering a fair assessment irrespective of decision thresholds.

\paragraph{For deployment scenarios with elevated false negative costs, use AUROC:} In applications where the consequences of false negatives are especially high, such as in the early screening for critical illnesses like cancer (Figure \ref{fig:mistakes}c), a primary goal of the model will be to achieve the fewest missed cancer cases; equating to prioritizing model recall. In such a scenario, the most important mistakes to correct occur at lower score thresholds, as high-score mistakes will not change which positive samples are missed in deployment settings (as chosen thresholds are likely to be low). This behavior is the \emph{inverse} of what AUPRC prioritizes, demonstrating that in such situations, AUROC should be preferred over AUPRC.

\paragraph{For ethical resource distribution among diverse populations, use AUROC:} When faced with the challenge of ethically allocating scarce resources across a broad population, necessitating equitable benefit distribution among subgroups (Figure \ref{fig:mistakes}d), one must avoid prioritizing model improvements that selectively favor one subpopulation. As AUPRC will target high-score regions selectively, it risks unduely favoring high-prevalence subpopulations, as shown in Theorem~\ref{thm:theory_fairness} and Figures~\ref{fig:main_optimization_experiment} and \ref{fig:real_world_res}. Even though in this resource distribution problem, high-score regions are selectively important compared to low-score regions, the fact that in this problem, we must prioritize across all subpopulations equally means that AUPRC's global preference is untenable as it could induce bias.

\paragraph{For reducing false positives in high-cost, single-group intervention prioritization or information retrieval settings, use AUPRC:} In scenarios where the cost associated with false positives significantly outweighs that of false negatives, absent of equity concerns---such as in selecting candidate molecules from a fragment library for drug development trials, where only the most promising molecules will proceed to costly experimental validation (Figure \ref{fig:mistakes}e)---the metric of choice should facilitate a reduction in high-score false positives. This necessitates a focus on correcting high-score errors, for which AUROC might not be ideal due to its uniform treatment of errors across the score spectrum, potentially obscuring improvements in critical high-stake decisions.

\section{Literature Review: Examining how Claim~\ref{key_claim} Became so Widespread}
\label{sec:lit_review}
Claim~\ref{key_claim} states that ``AUPRC is better than AUROC in cases of class imbalance'' and is widespread in the literature. Via both a manual literature search and an automated search of over \NPapersScanned arXiv papers (see Appendix~\ref{sec:arxiv_search} for methodology), we observed \NPapersFound publications making this claim.\footnote{
Note that throughout this section, full citations for all of the larger lists of papers we reference will be relegated to Appendix Section~\ref{sec:arxiv_search} for concision.} This widespread disemmination of Claim~\ref{key_claim} \emph{has a significant impact on the validity of scientific discourse}. We observed examples of high-profile papers operating in medically critical settings where high-recall is a key priority evaluating ML systems via AUPRC due to their task's underlying class imbalance~\cite{wagner2023fully}; papers focusing on fairness critical applications relying on AUPRC due to this claim, even while our results demonstrate AUPRC has major \emph{problems} in the fairness regime~\cite{singh2021fairness,10.1145/3593013.3594045}, and numerous other papers perpetuating this source of scientific misinformation. 

Among the \NPapersFound papers we discovered referencing this claim, \NPapersNoCite did so with no associated citation. These papers were published in a wide range of venues, including high profile venues such as NeurIPS, ICML, and ICLR. This reflects not only the widespread belief in this claim, but also that we may be too comfortable making seemingly ``correct'' assertions without appropriate attribution in ML today. Further, 
Among the \NPapersWithCite that reference this claim and cite a source for this assertion, \NPapersCiteInvalidSource \emph{do not cite any papers that actually make this claim in the first place}.
Most often, papers erroneously attribute this claim to~\cite{roc_v_prc_main}, which was cited as a source for this claim \NPapersCiteDavis times. While~\cite{roc_v_prc_main} makes many interesting, meaningful claims about the ROC and PR curves, and \emph{does argue that the precision-recall curve may be more informative than the ROC in cases of class imbalance} it never asserts that the \emph{area under} the PR curve should be preferred over the \emph{area under} the ROC in cases of class imbalance. This distinction is important, because while curves can be used to simultaneously communicate many different performance statistics to their viewers across different FPR/TPR or Precision/Recall criteria, and therefore should be assessed primarily as communication tools to deployment experts, \emph{areas under these curves} are single-point summarizations which are primarily used for deployment-agnostic model comparison, which is a very different use case.

Even when appropriate papers are cited, the valid settings in which AUPRC should be preferred (see Section~\ref{sec:examples} for examples) are often over-shadowed by significant over-generalizations to preferring AUPRC in \emph{all} settings featuring class imbalance. For example, claims such as that ``precision-recall curves are more informative of deployment metrics'' are often used to justify why AUPRC should be used in all cases of class imbalance, rather than just in cases where the relevant deployment metrics are most directly associated with the PR curve.
Another class of arguments made in favor of Claim~\ref{key_claim} is rooted in claims that AUROC is poor in cases of class imbalance because its scores are misleadingly high. While this argument can reflect a meaningful limitation of the communication value of the ROC or the AUROC, comments about singleton metric results (rather than model comparison through metric values) are inherently orthogonal to the goal of model evaluation. \emph{In other words, what matters for model evaluation is not how high a given metric is, but rather the extent to which the metric meaningfully captures the right improvements in the model in the right ways.} The widespread nature of Claim~\ref{key_claim} has also led researchers astray when exploring new optimization procedures for AUPRC, by advocating for the importance of AUPRC when processing skewed data, even in domains such as medical diagnoses that often have high false negative costs relative to false positive costs~\cite{rebuttal_paper}.
For a more extensive breakdown of the arguments we observed in the literature and the sources making them, see Appendix Tables~\ref{tab:justifications_in_sources_referencing_claim} and~\ref{tab:justifications_in_sources_arguing_claim}.

\section{Limitations and Future Works}
\label{sec:limitations}
There are still a number of areas for further improvement and future work. Firstly, our theoretical findings can be refined and generalized to, e.g., take into account the difficulty of the target task (which may differ between subgroups), not require models to be calibrated (in the case of Theorem~\ref{thm:theory_fairness}), or specifically take into account more than 2 subpopulations for more nuanced comparisons beyond what can be inferred through pairwise comparisons between subpopulations, where our results would naturally apply. Further, extending our real-world experiments to more fairness datasets and identifying more nuanced ways to probe the impact of metric choice on disparity measures would significantly strengthen this work. These analyses can also be extended to consider other metrics, such as the area under the precision-recall-gain curve~\cite{flach2015precision}, the area under the net benefit curve \cite{talluri2016using, pfohl2022net}, and single-threshold, deployment centric metrics as well. In addition, one of the largest limitations of Theorem~\ref{thm:theory_fairness} is its restrictive assumptions, in particular the requirement of perfect calibration. A ripe area of future work is thus to investigate how we can soften our analyses for models with imperfect calibration or to determine whether or not our results imply anything about the viability or safety of post-hoc calibration of models optimized either through AUPRC or AUROC.

\section{Conclusion}
This study interrogates the pervasive assumption within the ML community that AUPRC is a better evaluation metric than AUROC in class-imbalanced settings. Our empirical analyses and literature review reveal several concrete findings that challenge this notion. In particular, we show that while optimizing for AUROC equates to minimizing the model's FPR in an unbiased manner over positive sample scores, optimizing for AUPRC equates to minimizing the FPR specifically for regions where the model outputs higher scores relative to lower scores. \emph{Further, we show both theoretically and empirically over synthetic and real-world fairness datasets that AUPRC can be an explicitly discriminatory metric through favoring higher-prevalence subgroups.}

In summary, our research advocates for a more thoughtful and context-aware approach to selecting evaluation metrics in machine learning. This paradigm shift, favoring a balanced and conscientious approach to metric selection, is essential in advancing the field towards developing not only technically sound, but also equitable and just models. %

\section*{Acknowledgements}
MBAM gratefully acknowledges support from a Berkowitz Postdoctoral Fellowship. 
JG is funded by the National Institute of Health through DS-I Africa U54 TW012043-01 and Bridge2AI OT2OD032701.

\bibliographystyle{plain}
\bibliography{main}

\newpage
\appendix
\onecolumn

\section{Broader Impact and Ethical Considerations}

This research paper challenges the conventional wisdom regarding the superiority of the AUPRC over AUROC in binary classification tasks with class imbalance and has several ethical implications and impacts.

Our analysis reveals that the preference for AUPRC in certain ML applications may not be empirically justified and could inadvertently amplify algorithmic biases. This calls for a re-examination of prevalent metrics within ML, especially in high-stakes domains like healthcare, finance, and criminal justice where biased models can have profound societal repercussions. The tendency of AUPRC to disproportionately favor models with higher prevalence of positive labels could exacerbate existing disparities, underscoring the ethical need for rigorous validation and scrutiny of evaluation metrics.

Additionally, our use of large language models for literature analysis demonstrates a novel approach in scrutinizing and re-evaluating long-standing assumptions in ML. This method could set a precedent for more comprehensive and robust scientific investigations in the field, fostering a culture of empirical rigor and ethical awareness.

The ethical dimension of our work lies in the spotlight it casts on metric selection in ML model evaluation. The potential of metrics like AUPRC to skew model performance favoring certain groups raises pressing concerns about fairness in algorithmic decision-making. This is particularly critical when algorithms influence key decisions affecting individuals and communities.

While we use the COMPAS dataset for recividism prediction in this work, we recognize the many societal issues with automated predictions of recidivism \cite{dressel2018accuracy}. We utilize this dataset as it is a commonly used dataset in the fairness literature, but do not advocate for deployment of these models in any way.

Our study contributes to the technical discourse on metric behaviors in ML and serves as a cautionary tale against uncritically embracing established norms. It underscores the imperative for careful metric selection aligned with ethical principles and fairness objectives in ML, highlighting the far-reaching consequences of these choices in shaping societal outcomes and advancing the field of ML.

\section{Code Availability}
All code is available at \url{https://github.com/hzhang0/auc_bias} and \url{https://github.com/Lassehhansen/ArxivMLClaimSearch}.

\section{Notation} \label{sec:notation}
Let $\mathcal X, \mathcal Y = {0, 1}$ represent a paired feature and binary classification label space from which i.i.d. samples $(x, y) \in \mathcal X \times \mathcal Y$ are drawn via the joint distribution over the random variables $\mathsf{x}, \mathsf{y}$. Let $f_{\vec \theta}: \mathcal X \to (0, 1)$  be a binary classification model parametrized by $\vec \theta \in \R^d$ for some $d \in \N$ outputting continuous probability scores over this space.

We define random variable $\rv s = f_{\vec \theta}(\rv x)$ to be the distribution of scores output by the model over input samples. Throughout the paper, $\vec \theta$ may be omitted if it is clear from context. We will occasionally also use the notation $\rv s_+$ and $\rv s_-$ to reflect the conditional distributions of model outputs conditioned on the label being 1 or 0, respectively:
\begin{align*}
    \rv s_+ &= f(\rv x) | \rv y = 1 \\
    \rv s_- &= f(\rv x) | \rv y = 0.
\end{align*}

Let $\NP$ be the number of data points with a positive label and $\NN$ the number with a negative label. Further, given a threshold $\tau$, define
\begin{align*}
    \TP_{\vec \theta}(\tau) &= \abs{\{x_i \in \mat X | p_i^{(\vec \theta)} \ge \tau, y_i = 1\}} \\
    \FN_{\vec \theta}(\tau) &= \abs{\{x_i \in \mat X | p_i^{(\vec \theta)} < \tau, y_i = 1\}} \\
    \TN_{\vec \theta}(\tau) &= \abs{\{x_i \in \mat X | p_i^{(\vec \theta)} < \tau, y_i = 0\}} \\
    \FP_{\vec \theta}(\tau) &= \abs{\{x_i \in \mat X | p_i^{(\vec \theta)} \ge \tau, y_i = 0\}} \\
    \FR(f, \tau) &= P_{\rv p}(p > \tau) \\
    \TPR_{\vec \theta}(\tau) &= \frac{\TP_{\vec \theta}(\tau)}{\TP_{\vec \theta}(\tau) + \FN_{\vec \theta}(\tau)} \\
                             &= P_{\rv x | \rv y = 1}(f(x) > \tau) \\
                             &= P_{\rv s | \rv y = 1}(s > \tau) \\
                             &= P(s_+ > \tau) \\
    \FPR_{\vec \theta}(\tau) &= \frac{\FP_{\vec \theta}(\tau)}{\FP_{\vec \theta}(\tau) + \TN_{\vec \theta}(\tau)} \\
                             &= P_{\rv x | \rv y = 0}(f(x) > \tau) \\
                             &= P_{\rv s | \rv y = 0}(s > \tau) \\
                             &= P(s_- > \tau) \\
    \PR_{\vec \theta}(\tau) &= \frac{\TP_{\vec \theta}(\tau)}{\TP_{\vec \theta}(\tau) + \FP_{\vec \theta}(\tau)} \\
                            &= P_{\rv y | f(\rv x) > \tau}(y = 1) \\
                            &= P_{\rv y | \rv s > \tau}(y = 1) \\
\end{align*}
Lastly, recall
\begin{align*}
    \AUROC_{\vec \theta} &= \int_0^1 \TPR_{\vec \theta} \frac{d\FPR_{\vec \theta}}{d\tau} d\tau \\
                         &= \int_0^1 \TPR_{\vec \theta} d \FPR_{\vec \theta} \\
                         &= 1 - \int_0^1 \FPR_{\vec \theta} d \TPR_{\vec \theta} \\
    \AUPRC_{\vec \theta} &= \int_0^1 \PR_{\vec \theta} \frac{d\TPR_{\vec \theta}}{d\tau} d\tau \\
                         &= \int_0^1 \PR_{\vec \theta} d \TPR_{\vec \theta} \\
\end{align*}

\section{Proof of Theorem~\ref{thm:reparametrization}} \label{sec:reparametrization_proof}
Recall that all notation is defined formally in Appendix~\ref{sec:notation}.

Here, we prove Theorem~\ref{thm:reparametrization}, which states

\aurocauprc*
\begin{proof}
    Recall that AUROC and AUPRC are as follows:
\begin{align*}
    \AUROC
      &= \int_0^1 \TPR\ d \FPR 
      = 1 - \int_0^1 \FPR\ d \TPR \\
    \AUPRC 
      &= \int_0^1 \PR\ d \TPR \\
\end{align*}

However, we can further clarify these by leveraging the fact that $\TPR(\tau) = P_{\rv s_+}(s_+ > \tau) = \int_\tau^1 s_+(t) dt$, as below:

\begin{align*}
    \int_0^1 g(\tau) d(\TPR(\tau))
      &= \int_1^0 g(\tau) \frac{d \TPR(\tau)}{d \tau} d\tau \\
      &= \int_1^0 g(\tau) \frac{d}{d\tau}  (P_{\rv s_+}(s_+ > \tau)) d\tau \\
      &= \int_1^0 g(\tau) \frac{d}{d\tau} \left(\int_\tau^1 s_+(t) dt\right) d\tau \\
      &= \int_1^0 g(\tau)(-s_+(\tau))d\tau \\
      &= \E{\rv s_+}{g}
\end{align*}

So, $\AUROC = 1 - \E{\rv s_+}{\FPR}$ \& $\AUPRC = \E{\rv s_+}{\PR}$.
To further simplify, we expand $\PR$ via Bayes rule:
\begin{align*}
    \PR
      &= 1-P_{\rv y | \rv s > \tau}(y = 0) \\
      &= 1-\underbrace{P_{\rv s | \rv y = 0}(s > \tau)}_{\FPR(\tau)} \frac{P_{\rv y}(y = 0)}{P_{\rv s}(s > \tau)} \\
\end{align*}
Thus,
\begin{align*}
    \AUROC(f)
      &= 1 - \E{t \sim \rv s_+}{\FPR(f, t)} \\
      &= 1 - \E{t \sim f(\rv x) | \rv y = 1}{\FPR(f, t)} \\
    \AUPRC(f)
      &= \E{t \sim \rv s_+}{\PR(f, t)} \\
      &= 1 - P_{\rv y}(y = 0)\E{t \sim s_+}{\frac{\FPR(f, t)}{P_{s \sim \rv s}(s > t)}} \\
      &= 1 - P_{\rv y}(y = 0)\E{t \sim f(\rv x) | \rv y = 1}{\frac{\FPR(f, t)}{P_{\rv x}(f(x)> t)}} 
\end{align*}
as desired.

\end{proof}

Synthetic validation of Theorem~\ref{thm:reparametrization} can also be found in our public code. Note that this formulation of AUPRC reflects earlier, different formulations of AUPRC, such as those found in the AUPRC optimization literature~\cite{rebuttal_paper}.

\section{Proof of Theorem~\ref{thm:mistake_order_differences}}
\label{sec:mistake_ordering_proof}
Here, we prove Theorem~\ref{thm:mistake_order_differences}, which states

\mistakeordering*
\begin{proof}
    Suppose $f$ has a given, non-empty set $M$ of atomic mistakes, such that, without loss of generality, $(x_i, x_{i+1}) \in M$. Suppose we construct a new model $f'$ with empirical distributions $p_+'$ and $p_-'$ by replicating the scores assigned by the model $f$ with $x_i$ and $x_{i+1}$ swapped (i.e., we correct the mistake $(x_i, x_{i+1})$, so $x_i' = x_{i+1}$ and $x_{i+1}' = x_i$). 

For which thresholds drawn from the original distribution $\rv p_+$ will the number of false positives of $f'$ differ from the number of false positives of $f$ at that same threshold? For any threshold $\tau < x_i$, fixing the mistake $(x_i, x_{i+1})$ will not change the number of false positives with threshold $\tau$, because both $x_i$ and $x_{i+1}$ are above $\tau$. For any threshold $\tau > x_{i+1}$, the number will likewise not change as both $x_i$ and $x_{i+1}$ are below $\tau$. The only $\tau$ that will have an impact is $\tau = x_i$ (recall that this is for an empirical distribution $p_+$ which contains $x_i$ and by the definition of atomic mistakes, there are no samples in $f$ with scores between $x_i$ and $x_{i+1}$). In $f$, the fact that $x_{i+1} > x_i$ yet has a negative label means that there will be one false positive corresponding to sample $i+1$ greater than $x_i$ in addition to all those that exist with scores greater than $x_{i+1}$. For $f'$, however, the samples have swapped, so $x_i' > x_{i+1}'$ and thus there is no false positive corresponding to sample $i+1$ at the positive score threshold corresponding to $x_i'$. Therefore, the number of false positives will only change to decrease by one for the threshold $x_i$ when the mistake $(x_i, x_{i+1})$ is corrected.

As AUROC weights the false positive rate at all positive samples equally and the false positive rate is proportional to the number of false positives, this shows that AUROC will improve by a constant amount no matter which atomic mistake is fixed.
In contrast, as AUPRC weights false positives inversely by the model's firing rate, it will improve by an amount that is directly linearly correlated with the inverse of the model's firing rate, implying that it favors mistakes with higher scores and disfavors mistakes with lower scores.

Note that as we use strict inequalities in our definition of the decision rule underlying the FPR here, a pair of scores that are tied but have different labels will not induce a false positive at the corresponding positively labeled sample's threshold, so separating such ties will have no impact on AUROC whatsoever. It would similarly not impact AUPRC as neither the FPR nor the model firing rate will decrease when the negative sample within the tie is perturbed to be strictly below the positive sample.

\end{proof}

Synthetic empirical validation of Theorem~\ref{thm:mistake_order_differences} can also be found in our public code.

\section{Proof of Theorem~\ref{thm:theory_fairness}}
\label{sec:fairness_proof}
In this section, we formally prove Theorem~\ref{thm:mistake_order_differences}. We begin by establishing Lemma~\ref{lemma:calibrated_relationships} and \ref{lemma:FR_calibration}.

\begin{lemma}
    Let a model $f$ be perfectly calibrated and yield score distributions for positive and negative samples from probability density functions $p_+$ and $p_-$. Then $p_+(t) = \frac{t}{1-t}\frac{P_{\rv y}(y=0)}{P_{\rv y}(y=1)}p_-(t)$
    \label{lemma:calibrated_relationships}
\end{lemma}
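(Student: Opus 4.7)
The plan is a short, direct Bayes-rule computation. Perfect calibration of $f$ means $P_{\rv y | \rv s}(y = 1 \mid s = t) = t$ for all $t$ in the support of the score distribution, and I would begin the proof by writing this condition out explicitly using the score densities $p_+$ and $p_-$ and the class priors.

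The key step is to apply Bayes' rule at the density level:
\begin{align*}
P_{\rv y | \rv s}(y = 1 \mid s = t)
&= \frac{p_+(t)\, P_{\rv y}(y = 1)}{p_+(t)\, P_{\rv y}(y = 1) + p_-(t)\, P_{\rv y}(y = 0)}.
\end{align*}
Setting the right-hand side equal to $t$ (by calibration), I then cross-multiply to obtain
\begin{align*}
t\bigl(p_+(t)\, P_{\rv y}(y = 1) + p_-(t)\, P_{\rv y}(y = 0)\bigr) = p_+(t)\, P_{\rv y}(y = 1).
\end{align*}
Rearranging to collect the $p_+(t)$ terms gives $(1-t)\, p_+(t)\, P_{\rv y}(y = 1) = t\, p_-(t)\, P_{\rv y}(y = 0)$, and dividing by $(1-t)\, P_{\rv y}(y = 1)$ yields the stated identity
\begin{align*}
p_+(t) = \frac{t}{1-t}\, \frac{P_{\rv y}(y = 0)}{P_{\rv y}(y = 1)}\, p_-(t).
\end{align*}

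There is no real obstacle here beyond bookkeeping: the only care needed is to restrict attention to $t \in (0,1)$ so that the factor $t/(1-t)$ is well-defined (the endpoints $t = 0$ and $t = 1$ can be handled trivially since the calibration condition forces $p_+(0) = 0$ and $p_-(1) = 0$), and to assume, as is standard, that the denominator in Bayes' rule is nonzero on the relevant support. Given these caveats, the identity follows immediately from calibration and Bayes' rule, making this essentially a one-line algebraic manipulation rather than a substantive argument.
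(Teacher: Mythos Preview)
Your proposal is correct and essentially identical to the paper's own proof: both apply Bayes' rule to relate $p_+(t)$, $p_-(t)$, and the class priors, substitute the calibration condition $P_{\rv y|\rv s}(y=1\mid s=t)=t$, and solve algebraically for $p_+(t)$. The only cosmetic difference is that the paper starts from $p_+(t)$ and expands via Bayes' rule, whereas you start from the posterior and cross-multiply, but the manipulations and level of rigor are the same.
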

\begin{proof}
    As this model is calibrated perfectly, we have that
    \begin{align*}
        p_+(t) &= P_{\rv s | \rv y = 1}(s = t) \\
                     &= \frac{P_{\rv y | \rv s = t}(y = 1)p_{\rv s}(t)}{P_{\rv y}(y=1)} \\
                     &= t \frac{P_{\rv y}(y=1)p_+(t) + P_{\rv y}(y=0)p_-(t)}{P_{\rv y}(y=1)} \\
                     &= t p_+(t) + t\frac{P_{\rv y}(y=0)}{P_{\rv y}(y=1)}p_-(t).
    \end{align*}
    Thus, $p_+(t) = \frac{t}{1-t}\frac{P_{\rv y}(y=0)}{P_{\rv y}(y=1)}p_-(t)$ as desired.
\end{proof}

\begin{lemma}
    Let a model $f$ be perfectly calibrated and yield score distributions for positive and negative samples from probability density functions $p_+$ and $p_-$, with overall distribution given by $p(t) = P_{\rv y}(y=1)p_+(t) + P_{\rv y}(y=0)p_-(t)$. Then for all $\tau \in (0, 1)$, $\FR(f, \tau) \le \frac{P_{\rv y}(y=1)}{\tau}$.
    \label{lemma:FR_calibration}
\end{lemma}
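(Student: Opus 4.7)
The plan is to interpret $\FR(f,\tau) = P_{\rv s}(s > \tau)$ and apply what is essentially a Markov-type inequality, using calibration to connect $P_{\rv y}(y=1)$ to the score density $p$. The key identity is that for a perfectly calibrated model, $P_{\rv y \mid \rv s = t}(y = 1) = t$, so by the law of total probability,
\[
P_{\rv y}(y=1) \;=\; \int_0^1 P_{\rv y \mid \rv s = t}(y = 1)\, p(t)\, dt \;=\; \int_0^1 t \, p(t)\, dt.
\]
This representation is the workhorse of the argument.

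From here, I would proceed by truncation: for any $\tau \in (0,1)$, since the integrand $t\,p(t)$ is nonnegative, restricting the integral to $[\tau,1]$ only decreases its value, and on this region $t \geq \tau$, so
\[
P_{\rv y}(y=1) \;\geq\; \int_\tau^1 t\, p(t)\, dt \;\geq\; \tau \int_\tau^1 p(t)\, dt \;=\; \tau\, P_{\rv s}(s > \tau) \;=\; \tau\, \FR(f, \tau).
\]
Rearranging yields $\FR(f,\tau) \leq P_{\rv y}(y=1)/\tau$, which is exactly the desired bound.

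Since the argument is essentially Markov's inequality applied to the score random variable $\rv s$ under calibration, there is no serious obstacle; the only subtlety worth flagging is making sure the calibration hypothesis is what lets us identify $\mathbb{E}[\rv s] = P_{\rv y}(y=1)$ in the first place, as opposed to some other quantity. Nothing else in the lemma requires the decomposition $p = P_{\rv y}(y=1)p_+ + P_{\rv y}(y=0)p_-$ beyond what is already used through the calibration identity, and strict positivity of $\tau$ is needed only to divide at the end.
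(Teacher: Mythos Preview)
Your proof is correct and, in fact, slightly more streamlined than the paper's. The paper takes a different route: it first invokes an auxiliary lemma (Lemma~\ref{lemma:calibrated_relationships}) establishing $p_+(t) = \frac{t}{1-t}\frac{P_{\rv y}(y=0)}{P_{\rv y}(y=1)}p_-(t)$ under calibration, then substitutes this into the mixture to obtain the exact identity $\FR(f,\tau) = P_{\rv y}(y=1)\int_\tau^1 \frac{1}{t}\,p_+(t)\,dt$, and finally bounds $\frac{1}{t}\le\frac{1}{\tau}$ and $\int_\tau^1 p_+\le 1$. Your argument bypasses the auxiliary lemma entirely by working directly with the overall density $p$ and the identity $\mathbb{E}[\rv s]=P_{\rv y}(y=1)$, reducing the whole thing to Markov's inequality. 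The paper's route buys an exact closed form for $\FR(f,\tau)$ in terms of $p_+$ alone (which could be useful elsewhere), while yours buys brevity and avoids any dependence on Lemma~\ref{lemma:calibrated_relationships}.
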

\begin{proof}
    By definition, we have
    \begin{align*}
        \FR(f, \tau) &= \int_{\tau}^1 P_{\rv y}(y=1) p_+(t) + P_{\rv y}(y=0)p_-(t) dt \\
                     &= \int_{\tau}^1 P_{\rv y}(y=1) p_+(t) + P_{\rv y}(y=1) \frac{1-t}{t}p_+(t) dt \\
                     &= P_{\rv y}(y=1)\int_{\tau}^1  \frac{1}{t}p_+(t) dt,
    \end{align*}
    where step two leverages the fact that $f$ is perfectly calibrated and the result in Lemma~\ref{lemma:calibrated_relationships}.
    
    As $t \ge \tau$, $\frac{1}{t} \le \frac{1}{\tau}$. Then, as $p_+(t) \ge 0$, $\int_{\tau}^1 \frac{1}{t}p_+(t)dt \le \frac{1}{\tau}\int_\tau^1p_+(t)dt$. Finally, as $\int_0^1p_+(t)dt = 1$, we see that $\int_\tau^1p_+(t)dt \le 1$. Therefore, 
    \begin{align*}
        \FR(f, \tau) &= P_{\rv y}(y=1)\int_{\tau}^1  \frac{1}{t}p_+(t) dt \\
                     &\le P_{\rv y}(y=1) \cdot \frac{1}{\tau} \cdot 1\\
                     &= \frac{P_{\rv y}(y=1)}{\tau}.
    \end{align*}
\end{proof}

\fairness*
\begin{proof}
    Given Theorem~\ref{thm:mistake_order_differences}, the atomic mistake that would, upon correction, result in the largest improvement to AUPRC is the mistake which occurs at maximal score (as this minimizes the firing rate, which is the denominator in the weighting term for AUPRC). Suppose that at threshold $\tau$, the probability that a mistake will occur above score $\tau$ in subgroup $1$ with $N$ samples drawn is at least $\delta \in (0, 1]$. As the parameters for subgroup 1 are fixed as we vary the prevalence for subgroup 2, $\tau$ can be seen as a constant with respect to the limit we are taking.

    But, by Lemma~\ref{lemma:FR_calibration} and by the fact that $f$ is perfectly calibrated for subgroup 2, we know that the probability that $f$ will output a score for sample 2 regardless of its label that exceeds $\tau$ is upper bounded by $\frac{p_{\rv y}^{(2)}}{\tau}$. In the limit as $p_{\rv y}^{(2)}$ tends to zero, the probability that any probabilities will be observed at our greater than $\tau$ from subgroup 2 likewise tends to zero.

    This means that while the probability that we observe a mistake from subgroup 1 stays fixed at at least $\delta > 0$, the probability that we could observe any mistake that involves any sample from subgroup 2 (either a cross-group mistake or a purely subgroup 2 mistake) tends to zero, establishing the claim.   
\end{proof}

\clearpage
\section{Details for Synthetic Experiments}
\label{sec:simulation_details}

\subsection{Sampling a random model with a given AUROC}
\label{sec:auroc_procedure}
A key component of our synthetic experiments is the ability to sample a set of model scores and labels randomly that will have a target AUROC. To do this, we use the following procedure (which may or may not be previously known; we derived it from scratch for this work, but make no claim about its novelty). Let $N$ be the number of points we are sampling overall, and $N_+$ be the number of positive points being sampled (which is dictated by the user given prevalence).

\begin{enumerate}
    \item Uniformly sample a random collection of positive-label sample scores between zero and one.
    \item Between each (ascending) model positive score indexed from $1$ $p_+^{(i)}$ and $p_+^{(i+1)}$, we can count the number of positive samples that have scores less than any value in this window ($i$) and the number that have scores greater than any value in this window (which will be $N_+ - i$). 
    \item As the target AUROC is the probability that a randomly sampled negative will be ranked more highly than a randomly sampled positive, we can leverage the number of less-than positive scores $i$ and greater than positive scores $N_+ - i$ to compute the probability that a randomly sampled negative score will live in the window $(p_+^{(i)}, p_+^{(i+1)})$ via the binomial distribution.
    \item Now, to sample a random negative, we simply first sample a random window $(p_+^{(i)}, p_+^{(i+1)})$ with the probabilities assigned above, then uniformly sample a value $p_-$ within that window. We can repeat this process to the target number of negative samples $N - N_+$ to form our final set of scores.
    \item If desired, the output scores can further be scaled to have expectation given by the dataset's prevalence or can be adjusted via a calibration method to be calibrated given the assigned labels. Both procedures can be done without affecting the AUROC. Note that as any calibrated model will have expected probability given by the label's prevalence (See Appendix~\ref{sec:calibration}), the former condition is strictly weaker than the latter.
\end{enumerate}
The procedure outlined above guarantees that, in expectation, the AUROC of the generated set of scores and labels will be precisely the target AUROC. However, if you apply this procedure indpendently across different sample subpopulations, this guarantee can only be applied on each subpopulation individually, and not necessarily on the overall population due to the unspecified xAUC term. However, in practice, for the experiments we ran here, that impact neither meaningfully impacts our experiments nor were the joint AUROCs sufficiently different from the target AUROC to warrant a more complex methodology.

\subsection{Calibration includes prevalence matching}
\label{sec:calibration}
Let $\rv p$ be a random variable describing the probabilities (not scores) output by the model over the input distribution defined by the data generative function. If a model is calibrated, this means that $P_{\rv y | \rv p}(y = 1 | p = q) = q$ --- that the probability that the label for a given point is 1 is given precisely by the models output probability for that sample. With that in mind, we have:
\begin{align*}
    \E{\rv p}{q} &= \E{\rv p}{P_{\rv y | \rv p}(y=1|p=q)} \\
                 &= \int_{0}^1 P_{\rv y | \rv p}(y=1|p=q) p_{\rv p}(q) dq \\
                 &= \int_{0}^1 P_{\rv y, \rv p}(y=1,p=q) dq \\
                 &= P_{\rv y}(y=1)
\end{align*}

\subsection{Details on optimization procedures}
\label{sec:optimization_details}
\begin{enumerate}[label=M\arabic*.,topsep=0pt]
    \item \textbf{Adding Random Noise.} We sample a vector $\epsilon \in \mathbb{R}^n$, where each element is uniformly drawn from $[-\delta, \delta]$. We compute the selection metric for $S' = S + \epsilon$. We repeat this procedure 100 times, and return the $S'$ that achieves the maximum value for the selection metric. We vary the maximum magnitude of the perturbation $\delta \in [0, 0.1]$ in a grid. Results for this setting are shown in Figure~\ref{fig:syn_exp_1}.

     We note that this approach is subtly biased in favor of the lower-prevalence group. In particular, because scores for the low-prevalence group tend to be ``squished'' into a smaller region of the probability space, a random perturbation of fixed magnitude will proportionally induce more score \emph{permutations} in the low-prevalence group than the high-prevalence group, which affords the system greater capacity to improve the model for the low-prevalence group independent of the choice of AUROC or AUPRC.

    \item \textbf{Sequentially Fixing Atomic Mistakes.} We sequentially correct atomic mistakes, as defined in Figure \ref{fig:mistakes}. At each step, we first discover the set of all atomic mistakes $M$. To maximize AUROC, we randomly select a pair $(S_i, S_j) \in M$, and swap their scores in $S$, i.e. $S'_i = S_j, S'_j = S_i$. To maximize AUPRC, we swap the scores for the pair $(S_i, S_j) = \arg\max_{(s_i, s_j) \in M} s_j$. We repeat this process for 50 steps, with each one sequentially fixing another atomic mistake in $S$. Results for this setting are shown in Figures~\ref{fig:fix_mistakes_auprc} and \ref{fig:fix_mistakes_auroc}.
    
    \item  \textbf{Sequentially Permuting Nearby Scores.} We first sort $S$ and $Y$ such that $S$ is in ascending order. We apply a random permutation to $S$ by re-indexing it using a random ordering, but such that scores are not shuffled too far from their original index. Let $\sigma$ be the ordered sequence $(1, 2, ..., n)$. Define $\Omega$ to be the set of all permutations of $\sigma$, such that for all $\omega \in \Omega$, $|\omega_i - \sigma_i| \leq \gamma$ for $i \in \{1, ..., n\}$. At each step, we sample $\omega \in \Omega$ with $\gamma = 3$ twenty times, where each $\omega$ corresponds to a new candidate ordering of $S$. We compute the selection metric for each of the twenty orderings, and return $S'$ to be the score permutation that achieves the maximum value for the selection metric. We repeat this procedure for 25 steps, setting $S$ at each step to be the $S'$ output from the previous step. Results for this setting are shown in Figures~\ref{fig:rand_permute_auprc} and \ref{fig:rand_permute_auroc}.
\end{enumerate}
\begin{figure*}
    \centering
    \begin{subfigure}[b]{0.46\textwidth}
        \centering
        \includegraphics[width=\textwidth,trim={0 0 3cm 0},clip]{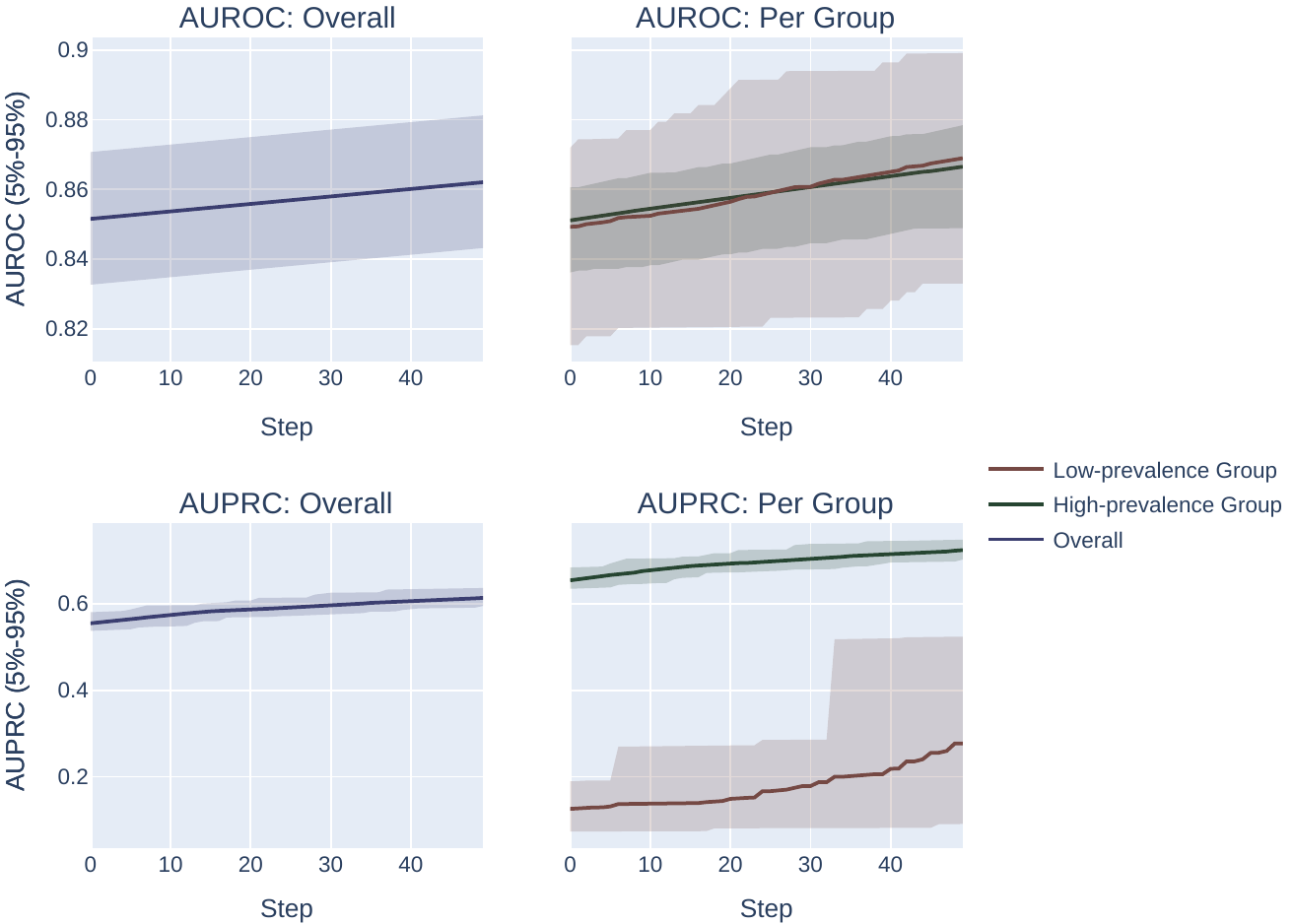}
        \caption{Fixing individual mistakes to optimize overall AUROC}    
        \label{fig:fix_mistakes_auroc}
    \end{subfigure}
    \hfill
    \begin{subfigure}[b]{0.51\textwidth}
        \centering
        \includegraphics[width=1.04\textwidth]{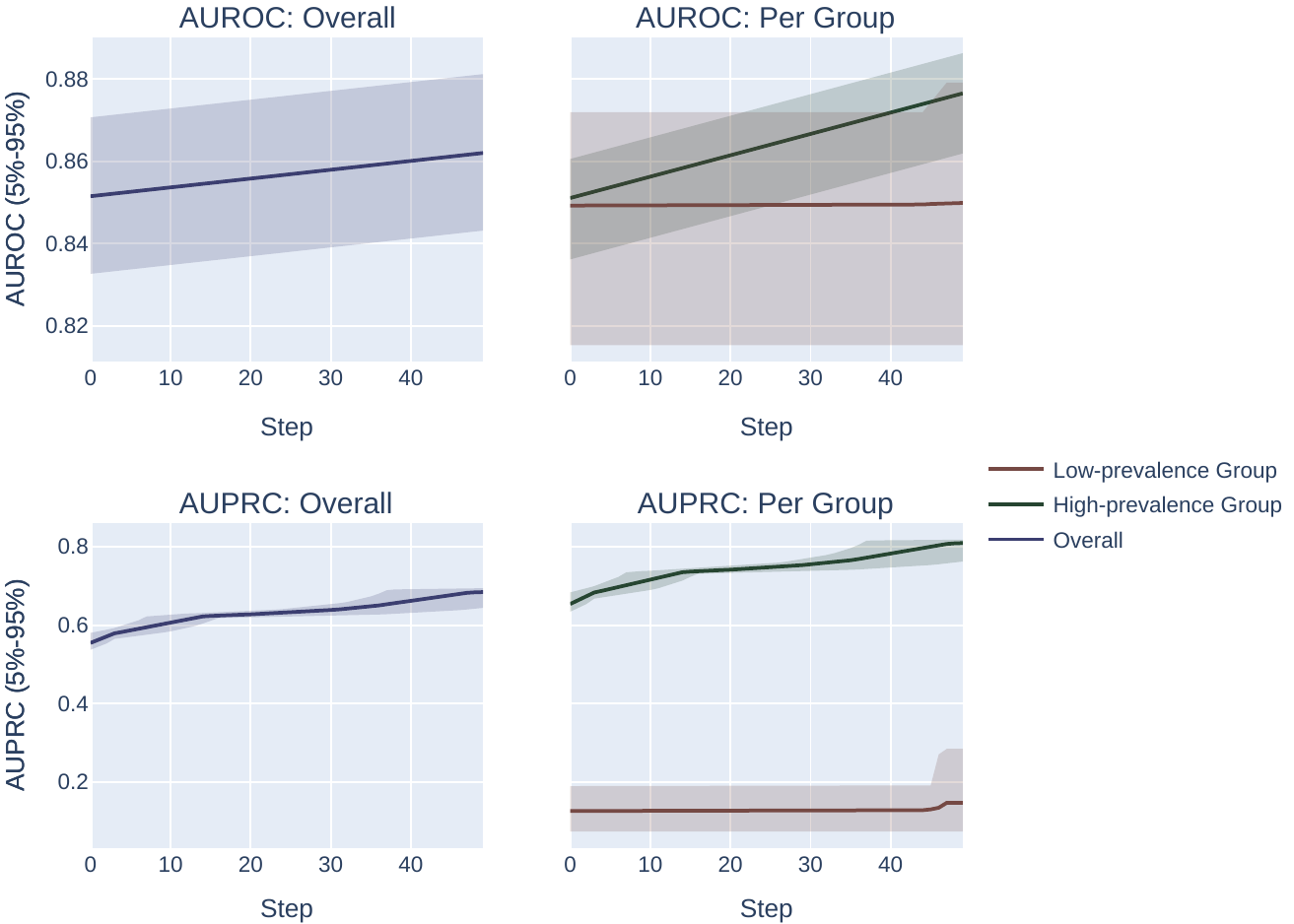}
        \caption{Fixing individual mistakes to optimize overall AUPRC}    
        \label{fig:fix_mistakes_auprc}
    \end{subfigure}

    \begin{subfigure}[b]{0.46\textwidth}
        \centering
        \includegraphics[width=\textwidth,trim={0 0 3cm 0},clip]{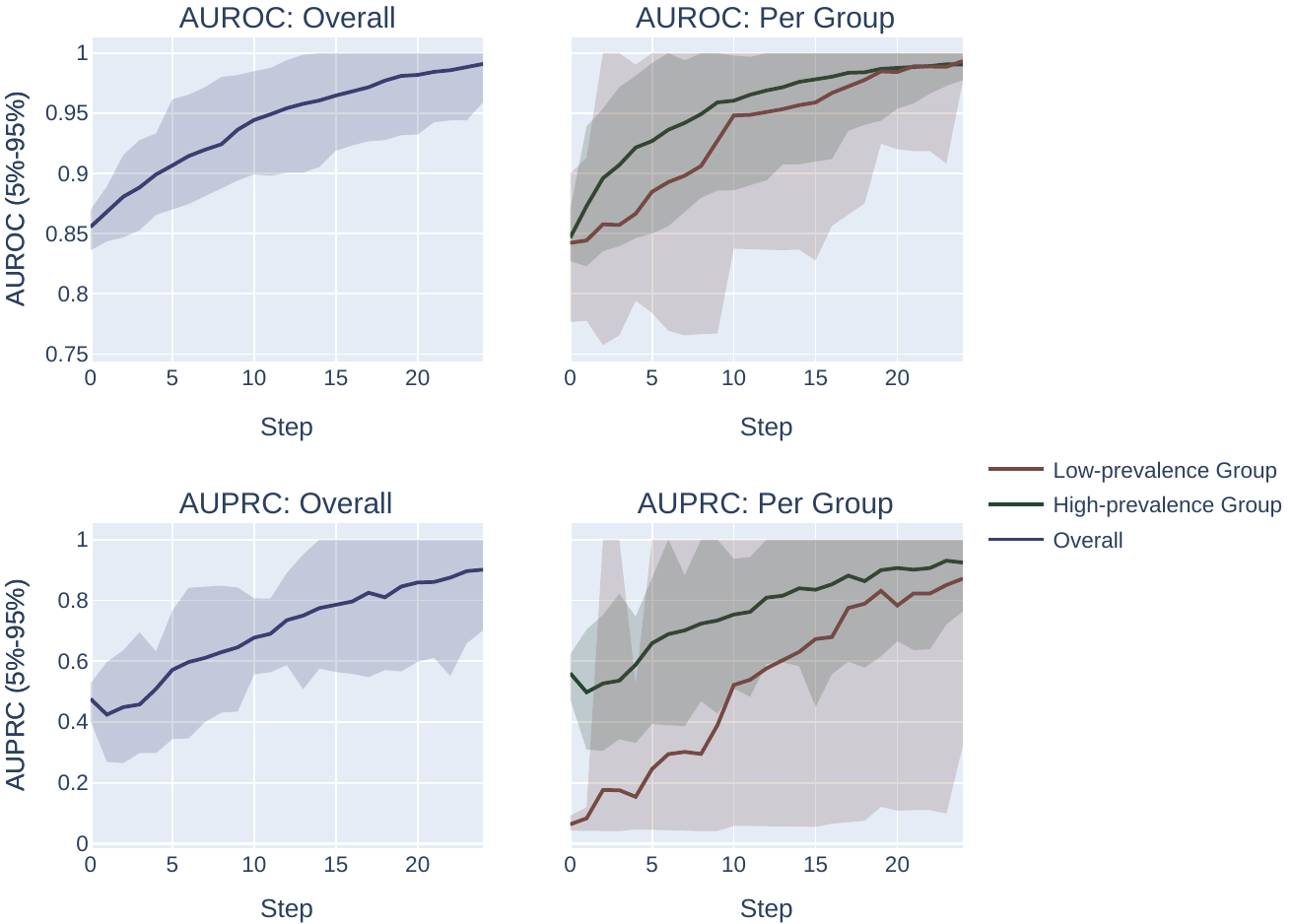}
        \caption{Randomly permuting scores to optimize overall AUROC}    
        \label{fig:rand_permute_auroc}
    \end{subfigure}
    \hfill
    \begin{subfigure}[b]{0.51\textwidth}
        \centering
        \includegraphics[width=1.04\textwidth]{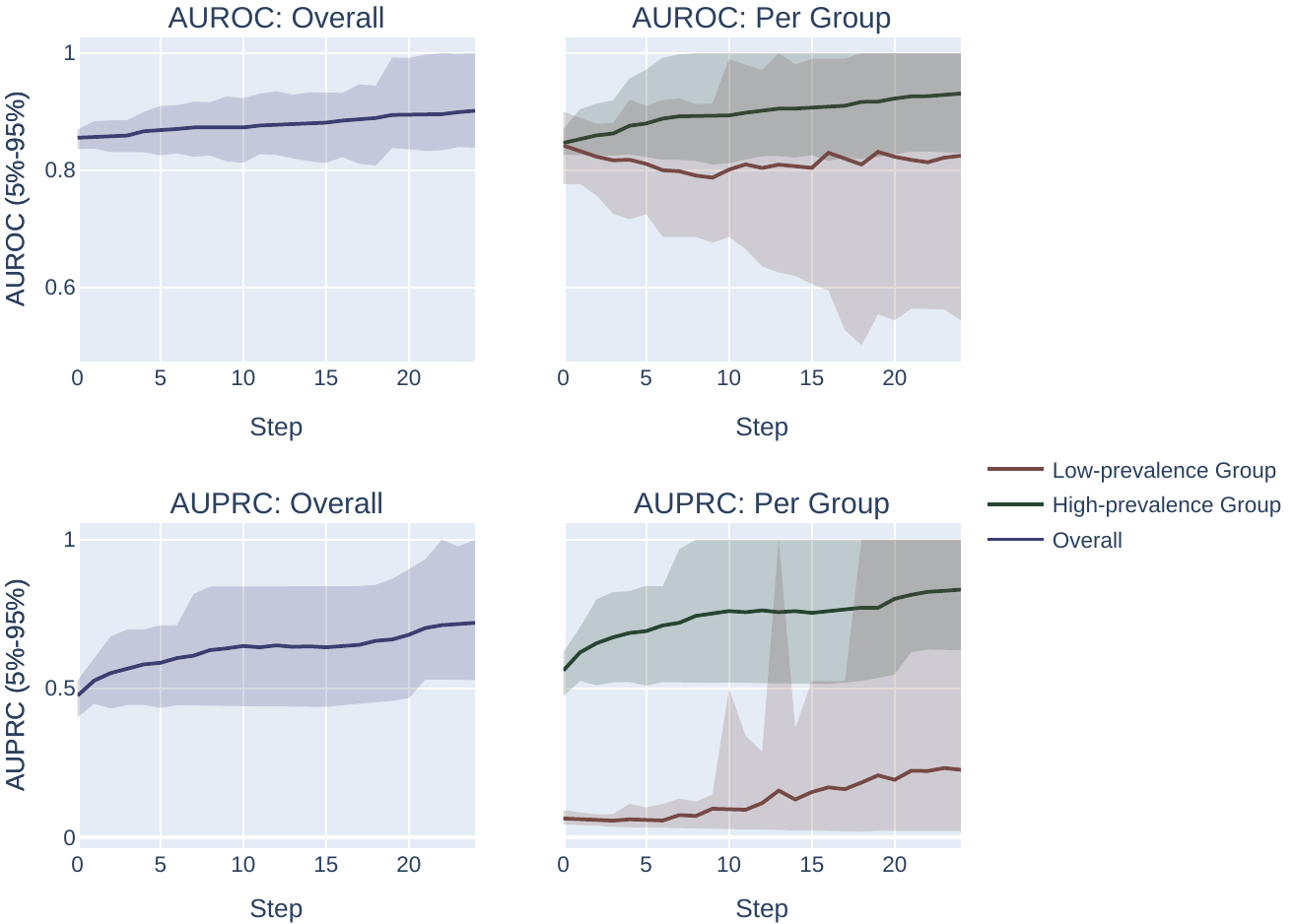}
        \caption{Randomly permuting scores to optimize overall AUPRC}    
        \label{fig:rand_permute_auprc}
    \end{subfigure}
    \caption{Comparison of the impact of optimizing for overall AUROC and overall AUPRC on the per-group AUROC and AUPRCs of two groups in a synthetic setting, using both the \textit{sequentially fixing individual mistakes} optimization procedure (M2; \emph{top}) and        
    the \textit{sequentially permuting nearby scores} optimization procedure (M3; \emph{bottom}) described in Section \ref{subsec:synthetic_exp}. Note that the prevalence of $Y$ in the high-prevalence group and the low-prevalence group are 0.05 and 0.01 respectively. } 
    \label{fig:full_optimization_experiment}
\end{figure*}

\begin{figure*}
    \centering
    \begin{subfigure}[b]{0.46\textwidth}
        \centering
        \includegraphics[width=\textwidth,trim={0 0 3cm 0},clip]{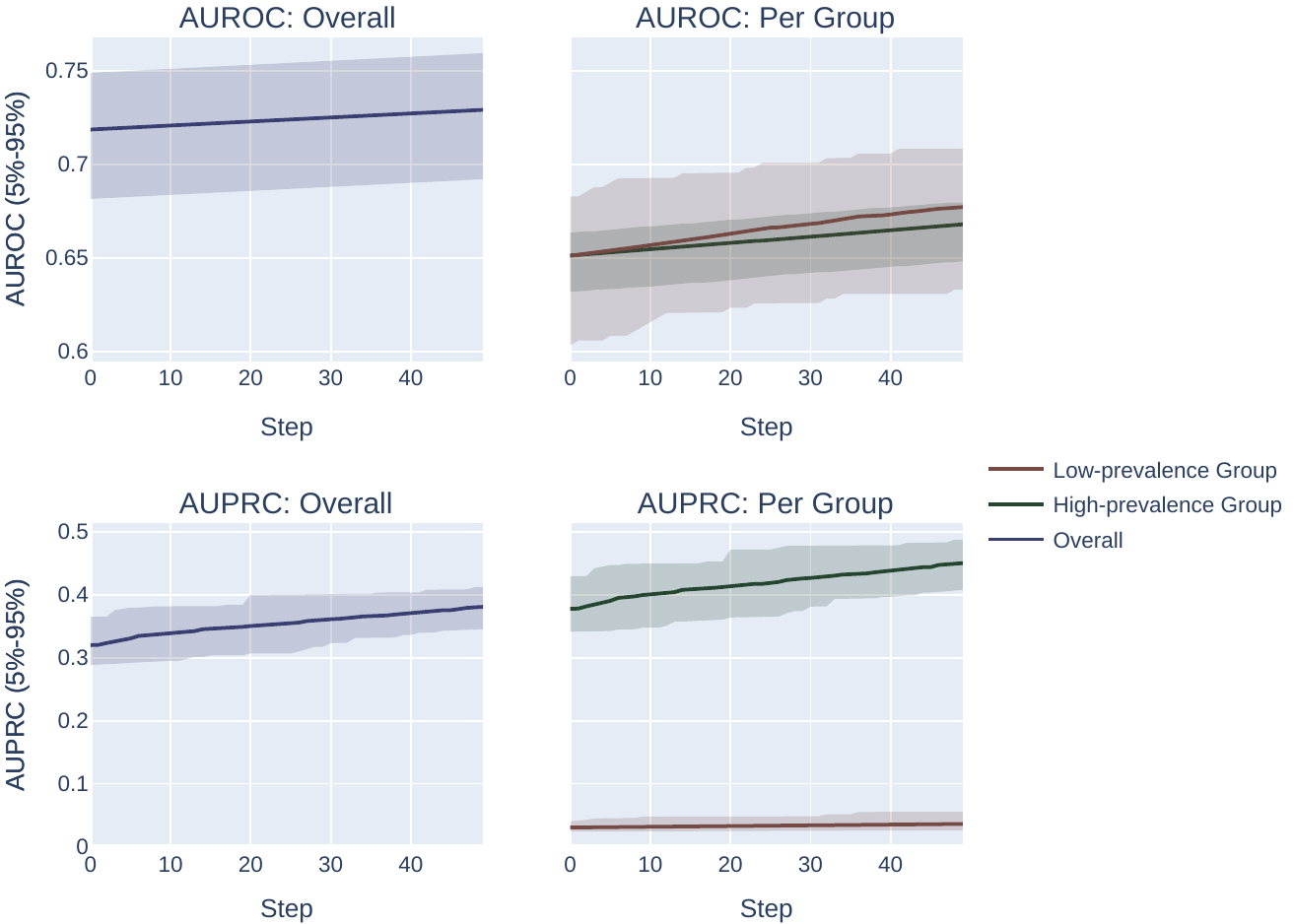}
        \caption{Fixing individual mistakes to optimize overall AUROC}    
        \label{fig:fix_mistakes_auroc_065}
    \end{subfigure}
    \hfill
    \begin{subfigure}[b]{0.51\textwidth}
        \centering
        \includegraphics[width=1.04\textwidth]{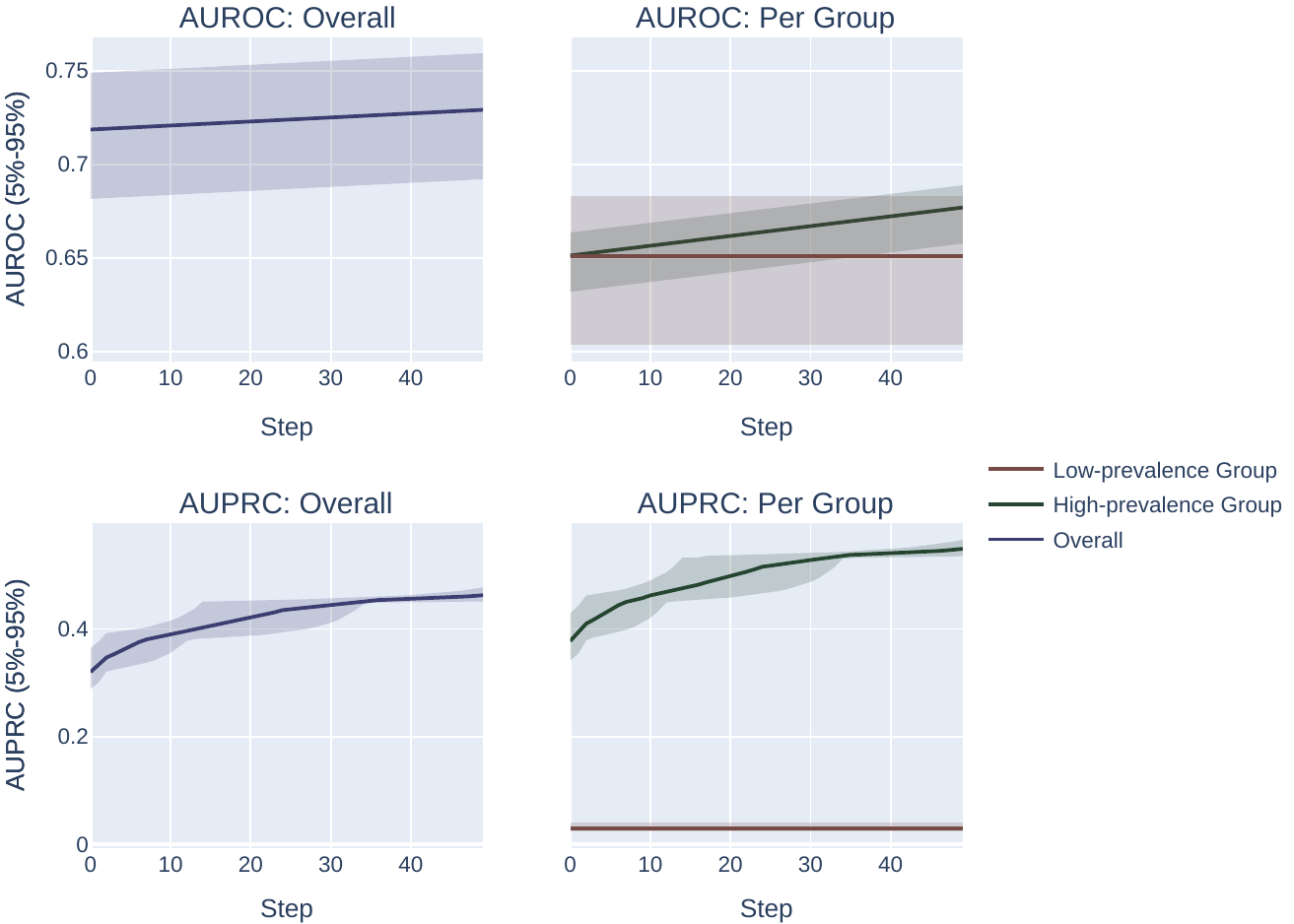}
        \caption{Fixing individual mistakes to optimize overall AUPRC}    
        \label{fig:fix_mistakes_auprc_065}
    \end{subfigure}

    \begin{subfigure}[b]{0.46\textwidth}
        \centering
        \includegraphics[width=\textwidth,trim={0 0 3cm 0},clip]{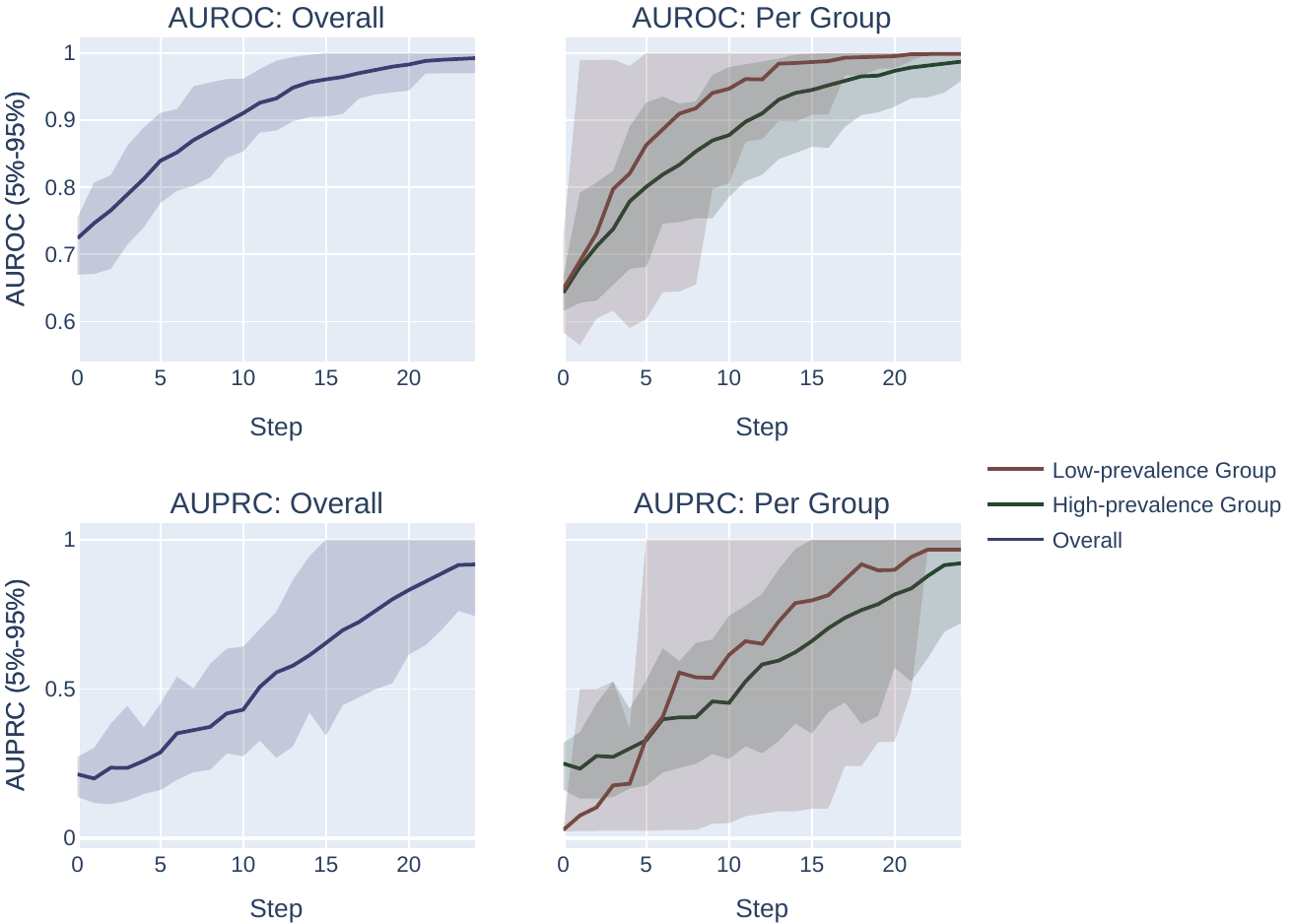}
        \caption{Randomly permuting scores to optimize overall AUROC}    
        \label{fig:rand_permute_auroc_065}
    \end{subfigure}
    \hfill
    \begin{subfigure}[b]{0.51\textwidth}
        \centering
        \includegraphics[width=1.04\textwidth]{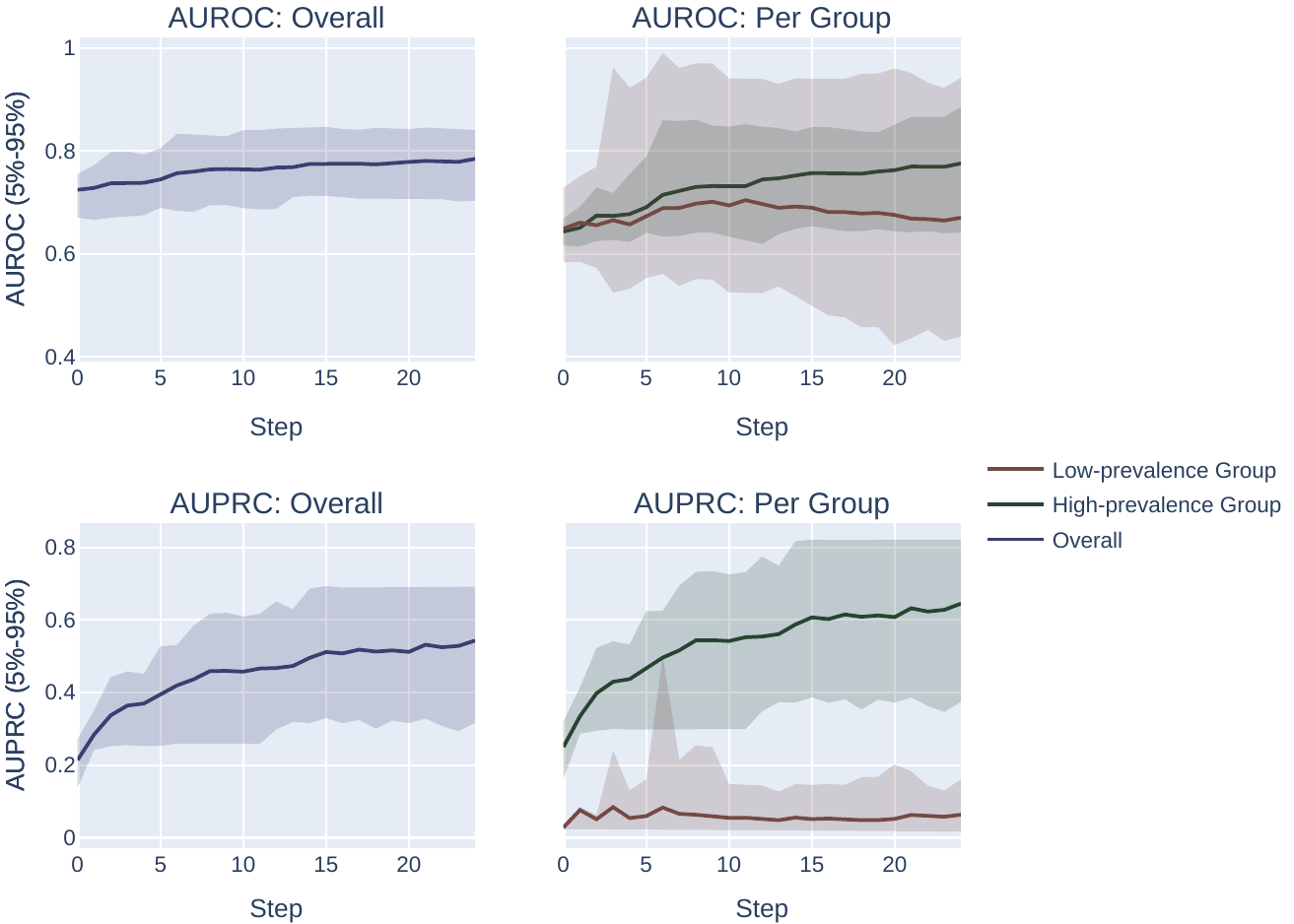}
        \caption{Randomly permuting scores to optimize overall AUPRC}    
        \label{fig:rand_permute_auprc_065}
    \end{subfigure}
    \caption{Comparison of the impact of optimizing for overall AUROC and overall AUPRC on the per-group AUROC and AUPRCs of two groups in a synthetic setting where the initial AUROC was set to 0.65 rather than 0.85, using both the \textit{sequentially fixing individual mistakes} optimization procedure (M2; \emph{top}) and        
    the \textit{sequentially permuting nearby scores} optimization procedure (M3; \emph{bottom}) described in Section \ref{subsec:synthetic_exp}. Note that the prevalence of $Y$ in the high-prevalence group and the low-prevalence group are 0.05 and 0.01 respectively. } 
    \label{fig:full_optimization_experiment_065}
\end{figure*}

\begin{figure*}[htbp!]
    \centering
    \begin{subfigure}[b]{0.46\textwidth}
        \centering
        \includegraphics[width=\textwidth,trim={0 0 3cm 0},clip]{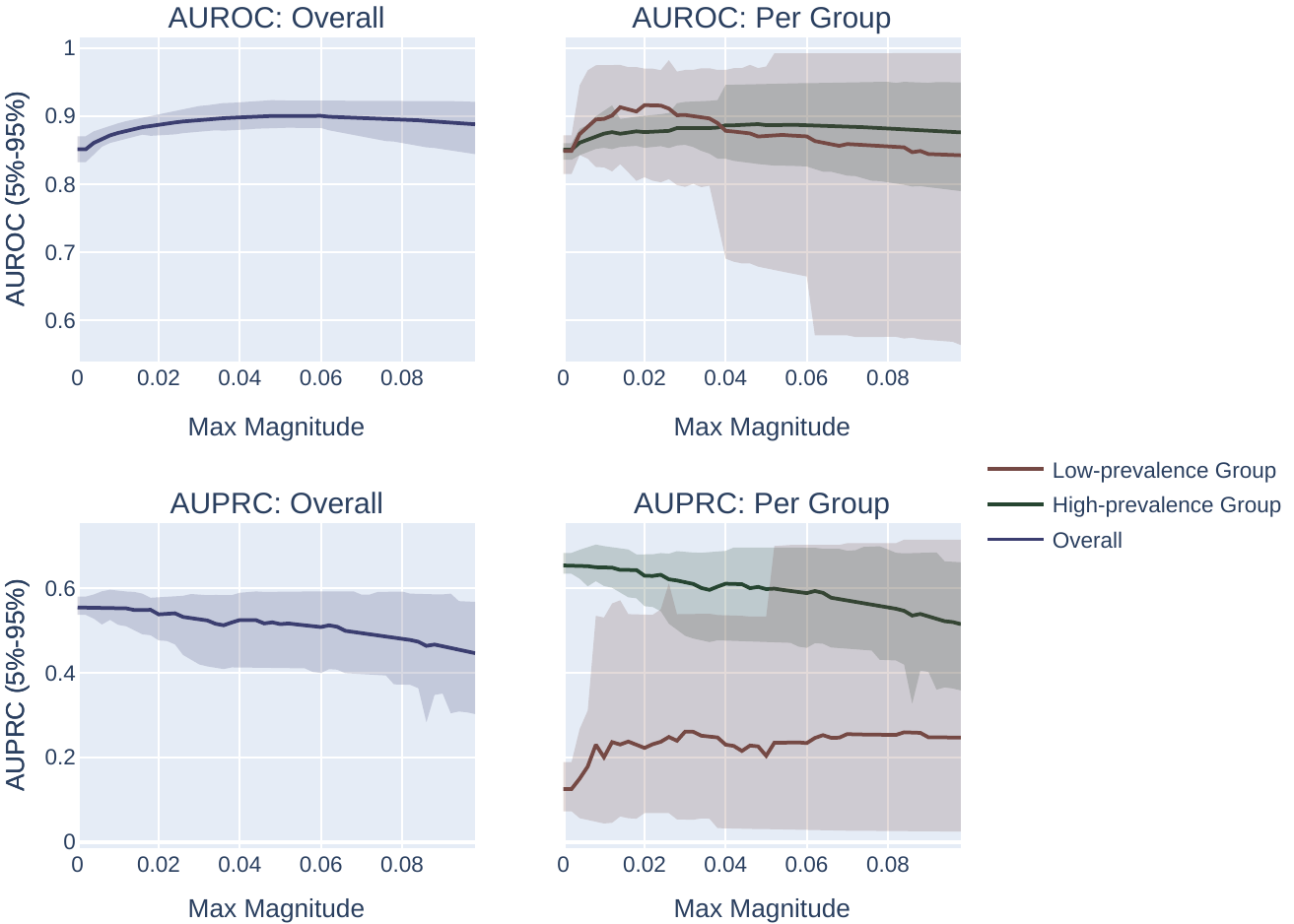}
        \caption{Optimizing for Overall AUROC}    
        \label{fig:syn_exp_1_a}
    \end{subfigure}
    \hfill
    \begin{subfigure}[b]{0.51\textwidth}
        \centering
        \includegraphics[width=1.04\textwidth]{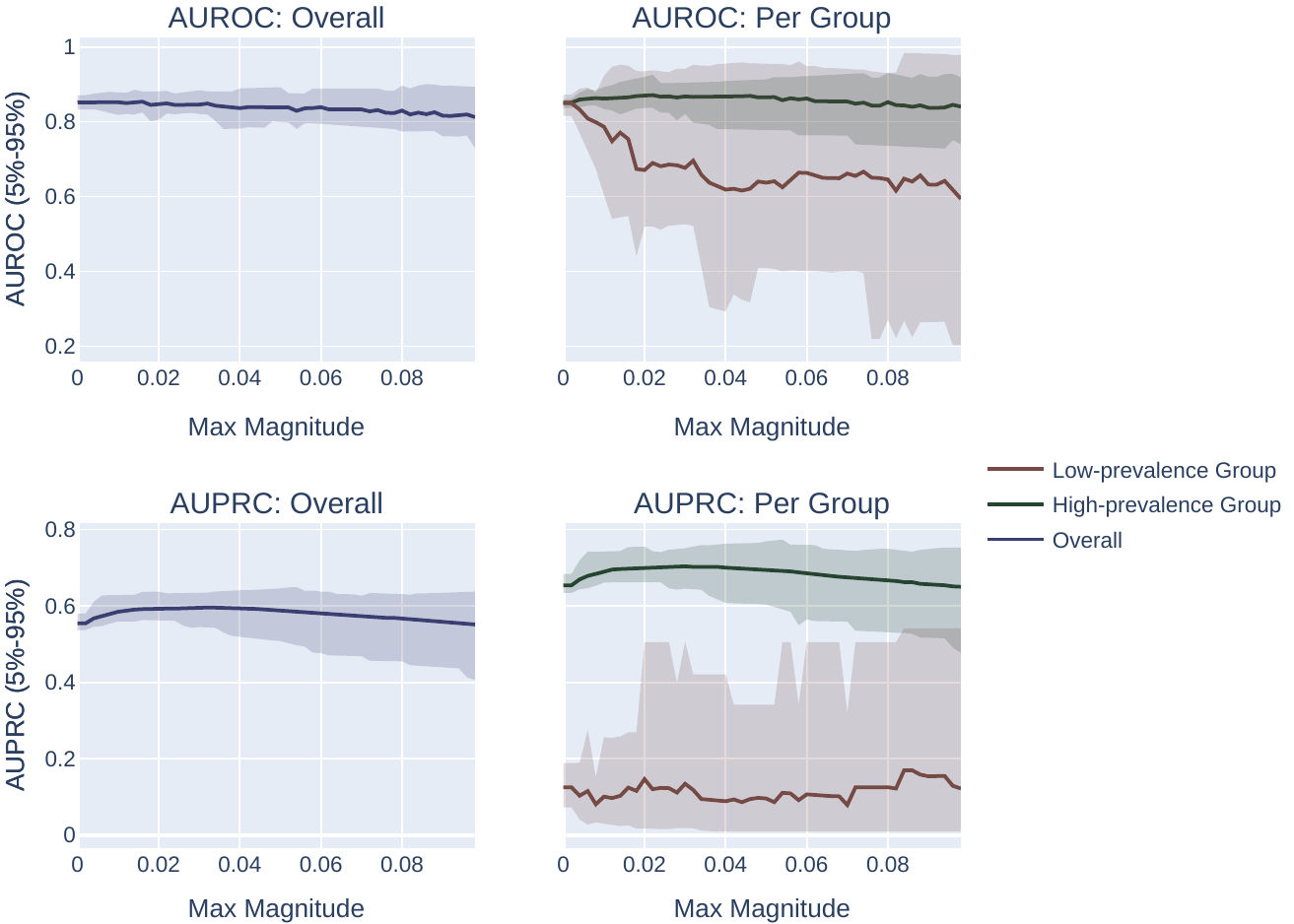}
        \caption{Optimizing for Overall AUPRC}    
        \label{fig:syn_exp_1_b}
    \end{subfigure}
    \caption{Comparison of the impact of optimizing for overall AUROC and overall AUPRC on the per-group AUROC and AUPRCs of two groups in a synthetic setting, using the \textit{adding random noise} optimization procedure (M1) described in Section \ref{subsec:synthetic_exp}. Note that the prevalence of $Y$ in $G_1$ and $G_2$ are 0.05 and 0.01 respectively. } 
    \label{fig:syn_exp_1}
\end{figure*}

\FloatBarrier

\section{Additional Details on Real World Experiments} 
\label{sec:add_real_world}

\subsection{Dataset Details}

We use the following four datasets. In all datasets, we use sex and race as protected attributes.
\begin{itemize}
    \item \texttt{adult} \cite{asuncion2007uci}: The UCI Adult dataset, where the goal is to predict whether an individual's income is $>\$50k$. 
    \item \texttt{compas} \cite{angwin2022machine}: The task to predict two-year recidivism. We only select samples belonging to ``African-American'' and ``Caucasian'', leading to a binary race variable. 
    \item \texttt{lsac} \cite{wightman1998lsac}: The task is to predict whether a law school applicant will pass the bar. We only select samples belonging to White and Black applicants.
    \item \texttt{mimic} \cite{johnson2016mimic}: We use the in-hospital mortality task proposed by \cite{harutyunyan2019multitask}, where the goal is to predict whether a patient will die in the ICU given labs and vitals from the first 48 hours of their hospital stay. We only select samples belonging to White and Black patients.
\end{itemize}

In each dataset, we balance the groups by subsampling the majority group. We then split each dataset into 50\% training, 25\% validation, 25\% test sets, stratified by the group. Dataset statistics can be found in Table \ref{tab:dataset_stats}.

\begin{table}[htbp!]
\centering
\caption{Dataset statistics for the four binary classification datasets used in this study. Note that $n$ refers to the number of samples \textit{after} balancing by the corresponding attribute. Here, ``Prevalence (Higher)'' refers to the rate at which the prediction label $y=1$ for the subpopulation with a higher such rate, and ``Prevalence (Lower)'' refers to the same rate but over the subpopulation of the dataset with a lower rate of $y=1$.}
\begin{tabular}{@{}llrrrr@{}}
\toprule
\textbf{Dataset} & \textbf{Attribute} & $\mathbf{n}$ & \textbf{\# Features} & \textbf{Prevalence (Higher)} & \textbf{Prevalence (Lower)} \\ \midrule
\texttt{adult}     & Sex       & 20,394     & 12                   & 30.1\%                 & 10.7\%                \\
\texttt{adult}     & Race       & 6,248     & 12                   & 24.6\%                 & 12.2\%     \\
\texttt{compas}   & Sex        & 2,438      & 6                    & 52.0\%                 & 39.4\%                \\
\texttt{compas}   & Race        & 4,908      & 6                    & 55.4\%                 & 42.3\%                \\
\texttt{lsac}     & Sex        & 15,906     & 8                    & 96.0\%                 & 95.0\%                \\
\texttt{lsac}     & Race        & 2,396     & 8                    & 96.5\%                 & 77.2\%                \\
\texttt{mimic}    & Sex        & 15,632     & 49                   & 12.5\%                 & 11.9\%                 \\ 
\texttt{mimic}    & Race        & 4,030     & 49                   & 13.9\%                 & 9.3\%                 \\ \bottomrule
\end{tabular}
\label{tab:dataset_stats}
\end{table}

\subsection{Hyperparameter Grid}

We use the following hyperparameter grid for our experiments:
\begin{itemize}[noitemsep,topsep=0pt]
    \item max depth: \{1, 2, ..., 9\}
    \item learning rate: [0,01, 0.3]
    \item number of estimators: [50, 1000]
    \item min child weight: \{1, 2, ..., 9\}
    \item use protected attribute as input feature: \{yes, no\}
    \item group weight of higher prevalence group: \{1, 2, 3, 4, 5, 10, 15, 20, 25, 50\}
\end{itemize}

\subsection{Additional Results}
\label{subsec:add_real_results}

All raw AUC results can be found in the code repository for these experiments, at \url{https://github.com/hzhang0/auc_bias}. Additionally, summarized results in different views can be found in Figures~\ref{fig:app_real_world_res} and \ref{fig:app_spearman_vs_prev_ratio}.

\begin{figure*}[htbp!]
\centering
  \includegraphics[width=.9\linewidth]{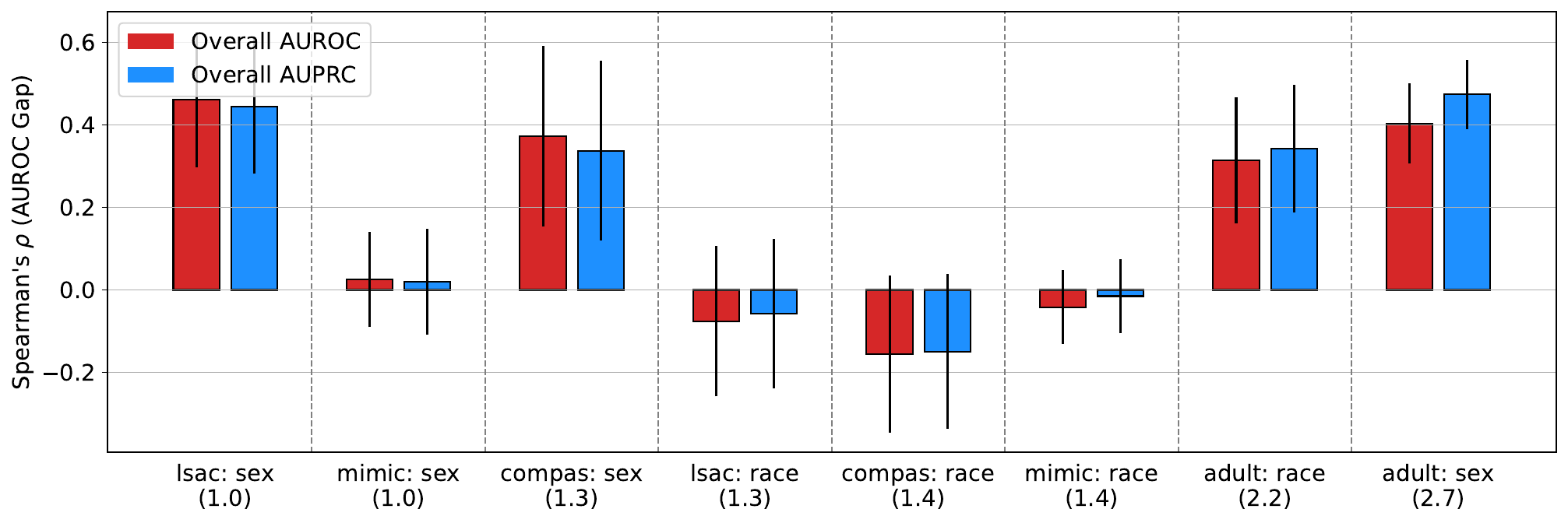}
\caption{Spearman's $\rho$ between the test-set signed AUROC gap versus the validation set overall AUPRC, and the AUROC gap versus the overall AUROC. Numbers in parentheses are the prevalence ratios between the two groups for the particular attribute, and datasets are sorted by this quantity. Error bars are 95\% confidence intervals from 20 different random data splits. }
\label{fig:app_real_world_res}
\end{figure*}

\begin{figure*}[htbp!]
\centering
  \includegraphics[width=.7\linewidth]{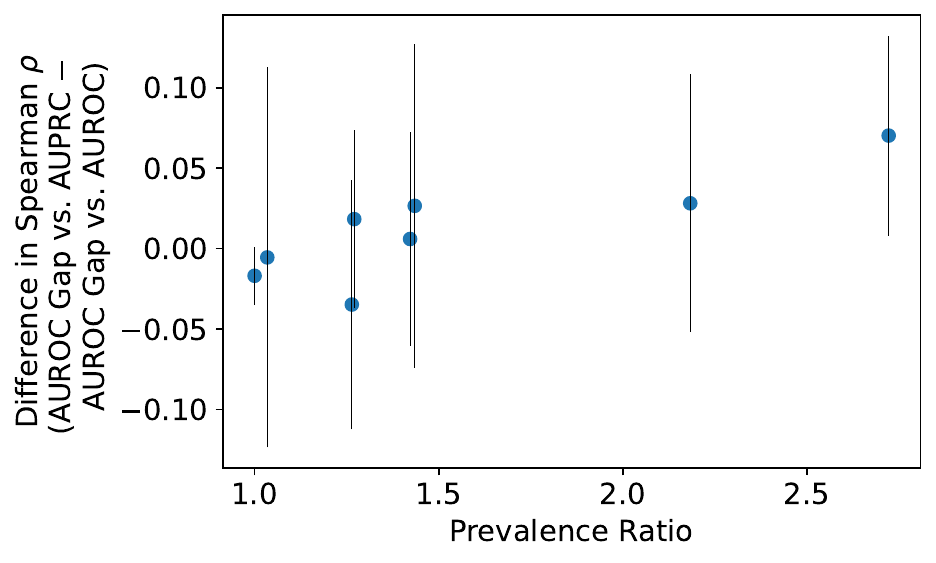}
\caption{Correlation between the prevalence ratio, and the difference between the Spearman's $\rho$ of the AUROC gap versus AUROC and the AUROC gap versus AUPRC. Each point represents a dataset and attribute combination. \emph{This correlation itself has a Spearman's $\rho$ of $0.905$ (p = 0.002).}}
\label{fig:app_spearman_vs_prev_ratio}
\end{figure*}

\FloatBarrier

\section{Literature Review Methodology}
\label{sec:arxiv_search}

\subsection{Paper Acquisition}
The initial phase of our comprehensive literature search involved the acquisition of datasets from both the ArXiv preprint server (through the RedPajama dataset on Hugging Face), as well as from a subset of years of NeurIPS, ICML, ICLR, ACL, and CVPR conference proceedings (all scraped manually). The ArXiv dataset, approximately 93.8 GB in size, encompassed over 1.5 million texts in JSONL format. For NeurIPS, we developed a script to scrape conference papers from 1987 to 2019 (9680 texts), aiming to enrich our search. Other venues contributed fewer papers to our assessment process.

\subsection{Keyword-Driven Filtering Process}

\begin{enumerate}
    \item \textbf{Keyword List Development:} We developed two distinct keyword lists to systematically identify papers relevant to our research on AUROC (Area Under the Receiver Operating Characteristic) and AUPRC (Area Under the Precision-Recall Curve) in our initial screening phase. The keyword lists can be accessed \href{https://github.com/Lassehhansen/ArxivMLClaimSearch/tree/main/keyword_lists/keywords_auprc.py}{here for AUPRC} and \href{https://github.com/Lassehhansen/ArxivMLClaimSearch/tree/main/keyword_lists/keywords_auroc.py}{here for AUROC}.

    \item \textbf{Automated Script-Based Search:} Python scripts were employed to traverse the Arxiv and NeurIPS datasets. These scripts detected occurrences of our predefined keywords, allowing efficient parsing of a vast number of texts from both sources.

    \item \textbf{Dual Mention Selection Criterion:} We focused on papers discussing both AUROC and AUPRC. This criterion ensured the relevance of the papers to our research question. Through this process, we narrowed the pool from 16,022 texts (containing either set of keywords) to 8,244 texts mentioning both in the Arxiv dataset. In the NeurIPS dataset, out of 9,680 texts reviewed, 78 were found to contain keywords from AUPRC and AUROC.
\end{enumerate}

\subsection{AI-Assisted Screening and Refinement}

\begin{enumerate}
    \item \textbf{Preliminary Analysis with GPT-3.5:} We utilized OpenAI's GPT-3.5 model for an initial round of AI-assisted analysis for the arXiv dataset. This model identified and extracted papers making explicit claims regarding the comparative effectiveness of AUPRC over AUROC in scenarios of class imbalance, reducing our dataset from Arxiv to 2,728 papers.

    \item \textbf{Further Refinement Using GPT-4.0 Turbo:} To refine our dataset further, we employed the GPT-4.0 Turbo model. Out of the 2,728 papers scrutinized from Arxiv using this model, 201 were found to be relevant. For NeurIPS, our focused search with GPT-4 resulted in identifying 2 papers of particular relevance to our thesis from the initial set that contained keywords related to both AUPRC and AUROC.
\end{enumerate}

\subsection{Manual Review}

\begin{itemize}
    \item \textbf{Shared Document for Collaborative Analysis:} We compiled all pertinent papers, along with their respective Arxiv IDs and the claims identified by GPT-4.0 Turbo, into a shared Google document for team review. Claims made in papers were found manually, and the specific quote of the claim they made was highlighted along with whether or not they had a citation for this claim.
\end{itemize}

\subsubsection{Final Papers}
After manual review, we identified \NPapersFound papers that make or reference some version of the claim that ``AUPRC is better than AUROC in cases of class imbalance.'' {\tiny \cite{cook2020consult,leisman2018rare,yang2015evaluating,gaudreault2021analysis,albora2022machine,lim2018disease,liu2023generalized,randl2023early,tusfiqur2022drgnet,piermarini2023predicting,10.1214/17-BA1076,weiss2021fail,afanasiev2021itsy,li2022improving,torfi2022differentially,wu2020not,miao2022precision,navarro2022human,cho2021planning,wagner2023fully,isupova2017learning,sarvari2021unsupervised,hiri2022nlpbased,herbach2021gene,si2021threelevel,narayanan2022point,li2020automating,lee2013link,kyono2018mammo,adler2021using,seo2021stateconet,hong2019rdpd,hagedoorn2017massive,yang2022computational,babaei2021aegr,garcin2021credit,mehboudi2022squeeze,yang2022spldextratrees,shen2021label,muthukrishna2019rapid,deng2023unraveling,yang2021deep,harer2018automated,meister2022audio,skarding2021foundations,alvarez2022revealing,zou2022spot,mangolin2022multimodal,mosteiro2021machine,hashemi2018asymmetric,lee2020attentionbased,zavrtanik2021draem,showalter2019minimizing,cranmer2016learn,bryan2023graph,zhang2017review,domingues2020comparative,markdahl2017experimental,fu2021finding,10.1145/3580305.3599302,rezvani2021semisupervised,ozyegen2022word,prapas2023deep,rayhan2020frnet,thambawita2020extensive,shukla2018interpolationprediction,blevins2021time,10.1093/bioinformatics/btx246,budka2021deep,hsu-etal-2020-characterizing,smith2023prediction,MIME,ju2018semisupervised,pashchenko2018machine,chu2018multilabel,silva2022pattern,pmlr-v151-bach-nguyen22a,deshwar2015reconstructing,brophy2020eggs,10037065,mongia2021computational,tiulpin2019multimodal,romero2022feature,rubin2012statistical,schwarz2021attentionddi,10.1145/3488269,pmlr-v193-lopez-martinez22a,Ahmed_Courville_2020,gong2021abusive,zhang2021nondiagonal,shukla2018modeling,lund-etal-2019-cross,ma2022medfact,ruff2021unifying,lei2015predicting,10.5555/3545946.3598754,rajabi2021clickthrough,10.1063/5.0080843,axelrod2023molecular,10031040,ando2017deep,doi:10.1137/1.9781611977172.44,mosquera2022impact,info:doi/10.2196/28776,won2019interpretable,9467551,srivastava2019putworkbench,pmlr-v106-moor19a,Danesh_Pazho_2023,jimaging4020036,10.1145/3441453,ma2020concare,karadzhov2022makes,doi:10.1142/9789813279827_0003,mousavian2016drug,rayhan2017idti,vens2008decision,rohani2019drug,LOPEZ2013113,10.1117/12.2254742,rao2022modelling,hibshman2023inherent,ntroumpogiannis2023meta,https://doi.org/10.1002/stvr.1840,hall2023reliable,9207261,boyd2013area,huo2022density,gurina2020application,banerjee2021machine,lee2019scaling,wiederer2022benchmark,payani2019inductive,wiegreffe-etal-2019-clinical,liu2023fine,feng2021imbalanced,li2021estimating,hendrycks2017a,romaszewski2021dataset,movahedi2023limitations,lu2021anomaly,christen2016application,chakraborty2022topical,wu2022pseudo,dong2019pocketcare,avramidis2023role,10.1093/bioinformatics/btaa164,ferraro2021low,balntas2017hpatches,wever2021easy,himmelhuber2021demystifying,jiang2017aptrank,bardak2021using,nowroozilarki2021real,amburg2021planted,peng2019temporal,schulte2016fast,hua2022increasing,nguyen2019scalable,zamir2021segmenting,sarabadani2017building,singh2021fairness,aggarwal2018two,liu2022multi,rubio2023teach,xu2022multi,pang2019deep,bovzivc2021end,reshma2021natural,henriksson2019towards,soleymani2018progressive,besnier2021triggering,razvadauskas2022exploring,budding2021evaluating,karim2019drug,sarvari2019graph,stewart2020personalized,hughes2017automated,vaze2022openset,won2021semisupervised,fomicheva-etal-2021-eval4nlp,frasca2019positive,yang2022neural,schlegl2022datacentric,guan2022prediction,minhas2019inferential,yi2020computer,jonas2018flare,zehori2023user,kim2024neural,du2018enhancement,kerwin2021stacked,sikder2019context,rahman2018dylink2vec,li-etal-2021-kfolden,jiang2015estimating,10.5555/3367471.3367593,bhattacharya2022supervised,kerrache2022complex,avati2018improving,wang2022rank,thomas2022real,wu2018moleculenet,pmlr-v143-pinaya21a,lis2023detecting,frasca2017multitask,RePEc:rie:riecdt:79,pmlr-v139-koh21a,erion2017anesthesiologistlevel,10.1007/978-3-030-68763-2_3,ma2022real,hagele2023bacadi,sharma2021scnet,haben2021discerning,sharma2014gamma,wang2020mimic,chen2017checkpoint,wang2020advae,babaei2019data,damodaran2017comparison,peng2020bitenet,verschuuren2020supervised,rose2023identifying,MIRSKY202075,10.1007/978-3-030-67664-3_43,9568692,blum2021fishyscapes,chan2021entropy,simoserra2015fracking,panariello2022consistency,isangediok2022fraud,jung2021standardized,huynh2019gene,garciagasulla2016hierarchical,ditz2023convolutional,chen2022asymptotically,manchanda2019targeted,Malinin2020Ensemble,murphy2019class,andrades2022machine,yasui2020feedback,baesens2021robrose,won2020evaluation,benhidour2022approach,vishwakarma2021metrics,kopetzki2021evaluating,del2021neural,chai2021automated,styles2021multi,clauser2020automation,benson2018found,mondrejevski2022flicu,oskarsdottir2022social,wang2022detecting,liu2022predicting,karimi2020explainable,ma2023mortality,10214122,SeamlessLightningNowcastingwithRecurrentConvolutionalDeepLearning,10.1117/12.2670148,10.1007/978-3-030-05414-4_41,lee2022estimation,8419754,xu2023cosmic,RIZOS2024100932,Nickel_2016,chen2018hybrid,Morano_2021,10.1093/rasti/rzad046,papadopoulos2019link,suresh2022incremental,Harutyunyan_2019,Zhang_2022,Dyrka_2019,janne_spijkervet_2021_5624573,10.1145/3298689.3347002,10.1371/journal.pone.0246790,NEURIPS2022_e261e92e,tavella2022machine,Wang_2022,gottfriedsen2021generalization,klyuchnikov2019data,pmlr-v97-chiquet19a,hong2015mcode,Claesen_2015,shulman2019unsupervised,lobanov2023predicting,10.1145/3297280.3297586,pol2021jet,mishra2024skeletal,mckay2018bridging,schlegl2018fully,schwarz2024ddos,Jaffe_2019,goodman2019packet2vec,kgoadi2019general,Zhou_2021,saremi2021reconstruction,chakrabarty2023mri,junuthula2016evaluating,chakraborty2021using,ünal2023ppaurora,zavrtanik2022dsr,Goldman_2022,tabourier2019rankmerging,galesso2022probing,chakraborty2021using,Shu_2019,cai2021exploration,he2020hybrid,Li_2020,kunar2021effective,stadnick2022metarepository,qi2021stochastic,jacobs2023enhancing,di2021pixel,kazmierski2021machine,Park_2022,shimoni2019evaluation,middlebrook2021muslcat,ahmed2019liuboost,pmlr-v209-merrill23a,Mollahosseini_2019,nallabasannagari2020data,abdollahi2023nodecoder,borji2015salient,Pang_2021,minnich2020ampl,10.1145/3340531.3412739,mallya2019savehr,dong2021hybrid,Xu_2020,Zaffar_2021,Ata_2021,frasca2019learning,Gangal_Arora_Einolghozati_Gupta_2020,jiang2019unified,neupane2018utilizing,madureira-schlangen-2023-instruction,yin2021early,Hornauer_2023,li2018semisupervised,golan2018deep,preston2022structuring,Jain_White_Radivojac_2017,Strodthoff_2019,singh2022multilabel,Hisano_2020,tan2018blended,oberdiek2018classification,Thoduka_2021,NEURIPS2018_3ea2db50,li2019bi,chen2020automated,Erkol_2023,desmedt2023predictive,jin2022towards,Huot_2022,oates2014quantifying,rathinavel2022detecting,bayramoglu2021automated,katevas2019finding,wang2022momentum,arcanjo2022efficient,lust2021survey,bloomfield2021automating,yeung2019incorporating,ijcai2019p816,stanik2019classifying,morano2020multimodal,wang2020automatic,Merrell_2020,Quellec_2021,gavali2021understanding,colombelli2022hybrid,hin2022linevd,10.1145/3593013.3594045,YUAN2021102221,10.1145/3131902,ibrahim2021knowledge,trotter2021epigenomic,zhang2023destseg,liu2020mesa,maier2024metrics,debaets2016positive,zhang2021noniid,lopez2022machine,hendrycks2018deep,gerg2021structural,10.1109/TASLP.2021.3133208,chalapathy2017robust,granger2017comparison,brosse2020lastlayer,hees2021recol,yang2023self,pmlr-v162-hendrycks22a,sokolovsky2022volumecentred,pons2017end,saase2020simple,zhang2022debiased,floyd2021understanding,miller2019evaluating,medo2019protrank,li2020anomaly,gong2021psla,intrator2018mdgan,ma2020adacare,liu2022multiple,kleiman2019highthroughput,brabec2020model,kiskin2021humbugdb,krompass2015type,adler2011wikipedia,lever2016classification,SHI2019170,10.1093/bioinformatics/btw570,boyd2012unachievable,he2013imbalanced,fernandez2018learning,roy2015experimental,brownlee_tour_2020,GRASP,berger2016precision,chicco2017ten}}.

Those papers that reference this claim without citation include {\tiny \cite{cook2020consult,leisman2018rare,yang2015evaluating,gaudreault2021analysis,albora2022machine,lim2018disease,weiss2021fail,afanasiev2021itsy,li2022improving,miao2022precision,cho2021planning,isupova2017learning,sarvari2021unsupervised,hiri2022nlpbased,li2020automating,lee2013link,kyono2018mammo,adler2021using,seo2021stateconet,hong2019rdpd,hagedoorn2017massive,babaei2021aegr,garcin2021credit,mehboudi2022squeeze,shen2021label,muthukrishna2019rapid,deng2023unraveling,yang2021deep,meister2022audio,skarding2021foundations,alvarez2022revealing,zou2022spot,mangolin2022multimodal,mosteiro2021machine,hashemi2018asymmetric,showalter2019minimizing,cranmer2016learn,bryan2023graph,zhang2017review,domingues2020comparative,markdahl2017experimental,fu2021finding,10.1145/3580305.3599302,ozyegen2022word,rayhan2020frnet,shukla2018interpolationprediction,blevins2021time,budka2021deep,hsu-etal-2020-characterizing,smith2023prediction,MIME,ju2018semisupervised,pashchenko2018machine,chu2018multilabel,silva2022pattern,pmlr-v151-bach-nguyen22a,deshwar2015reconstructing,10037065,mongia2021computational,tiulpin2019multimodal,romero2022feature,rubin2012statistical,schwarz2021attentionddi,pmlr-v193-lopez-martinez22a,Ahmed_Courville_2020,gong2021abusive,zhang2021nondiagonal,shukla2018modeling,lund-etal-2019-cross,ma2022medfact,ruff2021unifying,lei2015predicting,10.1063/5.0080843,axelrod2023molecular,ando2017deep,doi:10.1137/1.9781611977172.44,mosquera2022impact,info:doi/10.2196/28776,won2019interpretable,9467551,srivastava2019putworkbench,pmlr-v106-moor19a,Danesh_Pazho_2023,10.1145/3441453,ma2020concare,karadzhov2022makes,doi:10.1142/9789813279827_0003,rayhan2017idti,vens2008decision,LOPEZ2013113,10.1117/12.2254742,hibshman2023inherent,ntroumpogiannis2023meta,https://doi.org/10.1002/stvr.1840,hall2023reliable,9207261,boyd2013area,huo2022density,banerjee2021machine,wiegreffe-etal-2019-clinical,feng2021imbalanced,romaszewski2021dataset,christen2016application,chakraborty2022topical,avramidis2023role,ferraro2021low,himmelhuber2021demystifying,bardak2021using,nowroozilarki2021real,amburg2021planted,peng2019temporal,hua2022increasing,nguyen2019scalable,sarabadani2017building,aggarwal2018two,xu2022multi,reshma2021natural,soleymani2018progressive,budding2021evaluating,karim2019drug,sarvari2019graph,hughes2017automated,won2021semisupervised,frasca2019positive,schlegl2022datacentric,minhas2019inferential,jonas2018flare,kim2024neural,du2018enhancement,kerwin2021stacked,sikder2019context,rahman2018dylink2vec,jiang2015estimating,kerrache2022complex,avati2018improving,thomas2022real,wu2018moleculenet,frasca2017multitask,RePEc:rie:riecdt:79,sharma2021scnet,haben2021discerning,babaei2019data,verschuuren2020supervised,10.1007/978-3-030-67664-3_43,blum2021fishyscapes,simoserra2015fracking,isangediok2022fraud,huynh2019gene,garciagasulla2016hierarchical,manchanda2019targeted,Malinin2020Ensemble,yasui2020feedback,baesens2021robrose,won2020evaluation,benhidour2022approach,vishwakarma2021metrics,kopetzki2021evaluating,chai2021automated,clauser2020automation,benson2018found,mondrejevski2022flicu,oskarsdottir2022social,wang2022detecting,liu2022predicting,ma2023mortality,10214122,SeamlessLightningNowcastingwithRecurrentConvolutionalDeepLearning,10.1117/12.2670148,10.1007/978-3-030-05414-4_41,xu2023cosmic,Nickel_2016,10.1093/rasti/rzad046,papadopoulos2019link,Harutyunyan_2019,Zhang_2022,Dyrka_2019,janne_spijkervet_2021_5624573,10.1371/journal.pone.0246790,klyuchnikov2019data,pmlr-v97-chiquet19a,Claesen_2015,shulman2019unsupervised,lobanov2023predicting,10.1145/3297280.3297586,mishra2024skeletal,Jaffe_2019,kgoadi2019general,saremi2021reconstruction,chakrabarty2023mri,zavrtanik2022dsr,tabourier2019rankmerging,galesso2022probing,chakraborty2021using,Shu_2019,Li_2020,qi2021stochastic,di2021pixel,kazmierski2021machine,Park_2022,middlebrook2021muslcat,ahmed2019liuboost,pmlr-v209-merrill23a,nallabasannagari2020data,borji2015salient,10.1145/3340531.3412739,mallya2019savehr,dong2021hybrid,Ata_2021,frasca2019learning,Gangal_Arora_Einolghozati_Gupta_2020,jiang2019unified,neupane2018utilizing,madureira-schlangen-2023-instruction,golan2018deep,Jain_White_Radivojac_2017,Strodthoff_2019,Hisano_2020,tan2018blended,oberdiek2018classification,li2019bi,chen2020automated,Huot_2022,oates2014quantifying,katevas2019finding,wang2022momentum,arcanjo2022efficient,bloomfield2021automating,yeung2019incorporating,ijcai2019p816,stanik2019classifying,colombelli2022hybrid,10.1145/3593013.3594045,ibrahim2021knowledge,zhang2023destseg,liu2020mesa,lopez2022machine,hendrycks2018deep,gerg2021structural,10.1109/TASLP.2021.3133208,chalapathy2017robust,sokolovsky2022volumecentred,pons2017end,saase2020simple,zhang2022debiased,floyd2021understanding,miller2019evaluating,medo2019protrank,gong2021psla,ma2020adacare,adler2011wikipedia,SHI2019170,10.1093/bioinformatics/btw570,boyd2012unachievable,he2013imbalanced,fernandez2018learning,roy2015experimental,GRASP}}

Those that do so while citing only other papers that themselves never reference or argue Claim~\ref{key_claim} include {\tiny \cite{yang2015evaluating,li2020automating,kyono2018mammo,seo2021stateconet,hong2019rdpd,hagedoorn2017massive,babaei2021aegr,zou2022spot,mangolin2022multimodal,mosteiro2021machine,showalter2019minimizing,cranmer2016learn,bryan2023graph,zhang2017review,domingues2020comparative,shukla2018interpolationprediction,blevins2021time,hsu-etal-2020-characterizing,smith2023prediction,chu2018multilabel,deshwar2015reconstructing,mongia2021computational,rubin2012statistical,Ahmed_Courville_2020,gong2021abusive,shukla2018modeling,ma2022medfact,lei2015predicting,10.1063/5.0080843,ando2017deep,doi:10.1137/1.9781611977172.44,won2019interpretable,9467551,srivastava2019putworkbench,vens2008decision,LOPEZ2013113,hall2023reliable,9207261,romaszewski2021dataset,christen2016application,avramidis2023role,ferraro2021low,bardak2021using,amburg2021planted,peng2019temporal,nguyen2019scalable,aggarwal2018two,hughes2017automated,won2021semisupervised,minhas2019inferential,kim2024neural,du2018enhancement,kerwin2021stacked,jiang2015estimating,wu2018moleculenet,haben2021discerning,10.1007/978-3-030-67664-3_43,simoserra2015fracking,huynh2019gene,manchanda2019targeted,baesens2021robrose,won2020evaluation,vishwakarma2021metrics,benson2018found,liu2022predicting,ma2023mortality,10214122,SeamlessLightningNowcastingwithRecurrentConvolutionalDeepLearning,10.1117/12.2670148,10.1007/978-3-030-05414-4_41,Nickel_2016,Harutyunyan_2019,Dyrka_2019,janne_spijkervet_2021_5624573,10.1371/journal.pone.0246790,pmlr-v97-chiquet19a,Claesen_2015,shulman2019unsupervised,lobanov2023predicting,10.1145/3297280.3297586,Jaffe_2019,galesso2022probing,chakraborty2021using,Shu_2019,qi2021stochastic,middlebrook2021muslcat,ahmed2019liuboost,borji2015salient,10.1145/3340531.3412739,Ata_2021,Gangal_Arora_Einolghozati_Gupta_2020,jiang2019unified,neupane2018utilizing,golan2018deep,Jain_White_Radivojac_2017,Hisano_2020,tan2018blended,oberdiek2018classification,li2019bi,oates2014quantifying,yeung2019incorporating,ijcai2019p816,stanik2019classifying,liu2020mesa,hendrycks2018deep,gerg2021structural,10.1109/TASLP.2021.3133208,chalapathy2017robust,pons2017end,saase2020simple,adler2011wikipedia,boyd2012unachievable,he2013imbalanced,roy2015experimental}}

All papers identified, manual screening results, and extracted quotes will be made available upon publication.

\begin{table*}
    \centering
    \tiny
    \begin{tabular}{p{0.25\textwidth}p{0.25\linewidth}p{0.4\textwidth}}\toprule
        Claim & References & Commentary \\ \midrule
        Precision-recall curves or other associated metrics \emph{may} more appropriately reflect deployment objectives than the receiver operating characteristic. &
        \cite{cook2020consult,leisman2018rare,yang2015evaluating,muthukrishna2019rapid,deng2023unraveling,harer2018automated,Ahmed_Courville_2020} &
        While this claim is true, the informativeness of the PR curve for target deployment metrics is not sufficient to conclude that the AUPRC is superior to the AUROC in all cases of class imbalance. Despite this, it is often taken to assert this more general claim without caveat.\\
        
        AUPRC does not depend on the number of true-negatives, so will be less optimistic than the AUROC &
        \cite{leisman2018rare,kyono2018mammo,adler2021using,meister2022audio,mosteiro2021machine,showalter2019minimizing,cranmer2016learn,domingues2020comparative,rezvani2021semisupervised,hsu-etal-2020-characterizing,ju2018semisupervised,pashchenko2018machine,romero2022feature,vens2008decision} &
        As shown in Theorem~\ref{thm:reparametrization}, AUROC and AUPRC can both be naturally expressed as a function of the expectation of the model's false positive rate. More generally, lack of dependence on one quadrant among the mutually dependent four quadrants of a confusion matrix is not an informative property for the AUROC and AUPRC metrics. \\
        
        AUPRC will often be significantly lower, farther from optimality, and/or will grow more non-linearly as model performance improves than AUROC for low-prevalence tasks &
        \cite{leisman2018rare,yang2015evaluating,mehboudi2022squeeze,cranmer2016learn} &
        Metric utility for model comparison depends on how appropriately it prioritizes model improvements, and is therefore less about the raw magnitude of the metric and more about the situations in which the order of a set of models will differ under one metric vs. another. One could easily make AUROC yield smaller values or grow more quickly near optimality by simply exponentiating it, but this would not yield a better metric. \\
        
        AUPRC depends on prevalence, which is a desirable property &
        \cite{navarro2022human} &
        This statement is too vague to be formally evaluated; whether or not this dependence on prevalence is desirable depends on the context. For model comparison in general, we argue it is not desirable in this form as it induces the biases inhere in AUPRC previously discussed. \\
        
        AUPRC better captures differentiating a positive sample with high score from a ``hard'' negative sample (``hard'' meaning one also with high score) &
        \cite{jimaging4020036} &
        While this claim is true by Theorem~\ref{thm:mistake_order_differences}, it is not clear why this would be desired in general; this implicitly favors comparing ``hard'' negatives against ``easy'' positives as opposed to ``easy'' negatives against ``hard'' positives. \\

        AUROC is otherwise ``optimistic'' in low-prevalence settings &
        \cite{cook2020consult,afanasiev2021itsy,wu2020not,miao2022precision,cho2021planning,hagedoorn2017massive,yang2022computational,mangolin2022multimodal,silva2022pattern,10.1145/3488269,Ahmed_Courville_2020,zhang2021nondiagonal,lund-etal-2019-cross,ruff2021unifying,10.5555/3545946.3598754,rajabi2021clickthrough,ando2017deep,info:doi/10.2196/28776,karadzhov2022makes,hibshman2023inherent} &
        This claim is underspecified, and un-true. AUROC always means the same thing, probabilistically, and that meaning independent from class imbalance. \\

        AUPRC focues more on the positive (minority) class &
        \cite{tusfiqur2022drgnet,wu2020not,si2021threelevel,narayanan2022point,babaei2021aegr,garcin2021credit,deng2023unraveling,alvarez2022revealing,mosteiro2021machine,showalter2019minimizing,10.1145/3580305.3599302,ozyegen2022word,Danesh_Pazho_2023,jimaging4020036,LOPEZ2013113,rao2022modelling,ntroumpogiannis2023meta} &
        This is unfounded; both AUROC and AUPRC are weighted expectations over the model's false positive rate---AUPRC cares more about samples in regions of low firing rate, not explicitly about positive or minority samples. \\

        AUROC can not appropriately detect models with poor recall &
        \cite{navarro2022human} &
        This claim is unfounded; the AUROC clearly depends on the model's recall. Besides, if recall is the measure of interest, then that should be measured explicitly.\\
    \bottomrule\end{tabular}
    \caption{Various arguments and our responses to them present on a subset of papers for this claim in the literature.}
    \label{tab:justifications_in_sources_referencing_claim}
\end{table*}

\begin{table*}
    \centering \small
    \begin{tabular}{p{0.25\textwidth}p{0.25\linewidth}cp{0.4\textwidth}}\toprule
        Claim & References & Valid? & Commentary \\ \midrule
        Precision-recall curves or other associated metrics \emph{may} more appropriately reflect deployment objectives than the receiver operating characteristic. &
        \cite{cook2020consult,leisman2018rare,saito2015precision,yuan2015threshold,bleakley2007supervised,ozenne2015precision,rosenberg_imbalanced_2022,zhou2020new,lichtnwalter2012link,yang2015evaluating} &
        \checkmark &
        While this claim is true, the informativeness of the PR curve for target deployment metrics is insufficient to conclude that the AUPRC is superior to the AUROC in all cases of class imbalance. Despite this, it is often taken to assert this more general claim without caveat.\\
        
        AUPRC does not depend on the number of true negatives, so will be less optimistic than the AUROC &
        \cite{leisman2018rare,goadrich2006gleaner,cranmer2016learn} &
        &
        As shown in Theorem~\ref{thm:reparametrization}, AUROC and AUPRC can both be naturally expressed as a function of the expectation of the model's false positive rate. More generally, the lack of dependence on one quadrant among the mutually dependent four quadrants of a confusion matrix is not an informative property for the AUROC and AUPRC metrics. \\
        
        AUPRC will often be significantly lower, farther from optimality, and/or will grow more non-linearly as model performance improves than AUROC for low-prevalence tasks &
        \cite{leisman2018rare,yuan2015threshold,goadrich2006gleaner,mazzanti_why_2023,rosenberg_imbalanced_2022,zhou2020new,lichtnwalter2012link,yang2015evaluating,cranmer2016learn} &
        \checkmark &
        Metric utility for model comparison depends on how appropriately it prioritizes model improvements. Therefore, it is less about the raw magnitude of the metric and more about the situations in which the order of a set of models will differ under one metric vs. another. One could easily make AUROC yield smaller values or grow more quickly near optimality by simply exponentiating it, but this would not yield a better metric. \\
        
        AUPRC depends on prevalence, which is a desirable property &
        \cite{saito2015precision,goadrich2006gleaner,yuan2015threshold} &
        &
        This statement is too vague to be formally evaluated; whether or not this dependence on prevalence is desirable depends on the context. For model comparison in general, we argue it is not desirable in this form as it induces the biases in AUPRC previously discussed. \\
        
        AUPRC better captures differentiating a positive sample with high score from a ``hard'' negative sample (``hard'' meaning one also with high score) &
        \cite{rosenberg_imbalanced_2022} &
        \checkmark &
        While this claim is true by Theorem~\ref{thm:mistake_order_differences}, it is not clear why this would be desired in general; this implicitly favors comparing ``hard'' negatives against ``easy'' positives as opposed to ``easy'' negatives against ``hard'' positives. \\
        
    \bottomrule\end{tabular}
    \caption{Various arguments and our responses to them present on a subset of papers for this claim in the literature.}
    \label{tab:justifications_in_sources_arguing_claim}
\end{table*}

\subsection{Code Availability}
All code pertaining to the literature review search can be found in the following GitHub repository: \url{https://github.com/Lassehhansen/ArxivMLClaimSearch}

\clearpage

\section*{NeurIPS Paper Checklist}

\begin{enumerate}

\item {\bf Claims}
    \item[] Question: Do the main claims made in the abstract and introduction accurately reflect the paper's contributions and scope?
    \item[] Answer: \answerYes{}%
    \item[] Justification: The claims made in the abstract reflect this paper's contributions and scope.
    \item[] Guidelines:
    \begin{itemize}
        \item The answer NA means that the abstract and introduction do not include the claims made in the paper.
        \item The abstract and/or introduction should clearly state the claims made, including the contributions made in the paper and important assumptions and limitations. A No or NA answer to this question will not be perceived well by the reviewers. 
        \item The claims made should match theoretical and experimental results, and reflect how much the results can be expected to generalize to other settings. 
        \item It is fine to include aspirational goals as motivation as long as it is clear that these goals are not attained by the paper. 
    \end{itemize}

\item {\bf Limitations}
    \item[] Question: Does the paper discuss the limitations of the work performed by the authors?
    \item[] Answer: \answerYes{} %
    \item[] Justification: We discuss limitations and future work in Section~\ref{sec:limitations}
    \item[] Guidelines:
    \begin{itemize}
        \item The answer NA means that the paper has no limitation while the answer No means that the paper has limitations, but those are not discussed in the paper. 
        \item The authors are encouraged to create a separate "Limitations" section in their paper.
        \item The paper should point out any strong assumptions and how robust the results are to violations of these assumptions (e.g., independence assumptions, noiseless settings, model well-specification, asymptotic approximations only holding locally). The authors should reflect on how these assumptions might be violated in practice and what the implications would be.
        \item The authors should reflect on the scope of the claims made, e.g., if the approach was only tested on a few datasets or with a few runs. In general, empirical results often depend on implicit assumptions, which should be articulated.
        \item The authors should reflect on the factors that influence the performance of the approach. For example, a facial recognition algorithm may perform poorly when image resolution is low or images are taken in low lighting. Or a speech-to-text system might not be used reliably to provide closed captions for online lectures because it fails to handle technical jargon.
        \item The authors should discuss the computational efficiency of the proposed algorithms and how they scale with dataset size.
        \item If applicable, the authors should discuss possible limitations of their approach to address problems of privacy and fairness.
        \item While the authors might fear that complete honesty about limitations might be used by reviewers as grounds for rejection, a worse outcome might be that reviewers discover limitations that aren't acknowledged in the paper. The authors should use their best judgment and recognize that individual actions in favor of transparency play an important role in developing norms that preserve the integrity of the community. Reviewers will be specifically instructed to not penalize honesty concerning limitations.
    \end{itemize}

\item {\bf Theory Assumptions and Proofs}
    \item[] Question: For each theoretical result, does the paper provide the full set of assumptions and a complete (and correct) proof?
    \item[] Answer: \answerYes{} %
    \item[] Justification: The requisite assumptions and associated proofs for all theorems are provided in full technical detail in the appendix.
    \item[] Guidelines:
    \begin{itemize}
        \item The answer NA means that the paper does not include theoretical results. 
        \item All the theorems, formulas, and proofs in the paper should be numbered and cross-referenced.
        \item All assumptions should be clearly stated or referenced in the statement of any theorems.
        \item The proofs can either appear in the main paper or the supplemental material, but if they appear in the supplemental material, the authors are encouraged to provide a short proof sketch to provide intuition. 
        \item Inversely, any informal proof provided in the core of the paper should be complemented by formal proofs provided in appendix or supplemental material.
        \item Theorems and Lemmas that the proof relies upon should be properly referenced. 
    \end{itemize}

    \item {\bf Experimental Result Reproducibility}
    \item[] Question: Does the paper fully disclose all the information needed to reproduce the main experimental results of the paper to the extent that it affects the main claims and/or conclusions of the paper (regardless of whether the code and data are provided or not)?
    \item[] Answer: \answerYes{} %
    \item[] Justification: We describe the experiments completely and fully release our code. All datasets used are either synthetic and reproducible in the code itself or publicly available.
    \item[] Guidelines:
    \begin{itemize}
        \item The answer NA means that the paper does not include experiments.
        \item If the paper includes experiments, a No answer to this question will not be perceived well by the reviewers: Making the paper reproducible is important, regardless of whether the code and data are provided or not.
        \item If the contribution is a dataset and/or model, the authors should describe the steps taken to make their results reproducible or verifiable. 
        \item Depending on the contribution, reproducibility can be accomplished in various ways. For example, if the contribution is a novel architecture, describing the architecture fully might suffice, or if the contribution is a specific model and empirical evaluation, it may be necessary to either make it possible for others to replicate the model with the same dataset, or provide access to the model. In general. releasing code and data is often one good way to accomplish this, but reproducibility can also be provided via detailed instructions for how to replicate the results, access to a hosted model (e.g., in the case of a large language model), releasing of a model checkpoint, or other means that are appropriate to the research performed.
        \item While NeurIPS does not require releasing code, the conference does require all submissions to provide some reasonable avenue for reproducibility, which may depend on the nature of the contribution. For example
        \begin{enumerate}
            \item If the contribution is primarily a new algorithm, the paper should make it clear how to reproduce that algorithm.
            \item If the contribution is primarily a new model architecture, the paper should describe the architecture clearly and fully.
            \item If the contribution is a new model (e.g., a large language model), then there should either be a way to access this model for reproducing the results or a way to reproduce the model (e.g., with an open-source dataset or instructions for how to construct the dataset).
            \item We recognize that reproducibility may be tricky in some cases, in which case authors are welcome to describe the particular way they provide for reproducibility. In the case of closed-source models, it may be that access to the model is limited in some way (e.g., to registered users), but it should be possible for other researchers to have some path to reproducing or verifying the results.
        \end{enumerate}
    \end{itemize}

\item {\bf Open access to data and code}
    \item[] Question: Does the paper provide open access to the data and code, with sufficient instructions to faithfully reproduce the main experimental results, as described in supplemental material?
    \item[] Answer: \answerYes{} %
    \item[] Justification: As stated above, all data used is either synthetic and fully reproducible or publicly available.
    \item[] Guidelines:
    \begin{itemize}
        \item The answer NA means that paper does not include experiments requiring code.
        \item Please see the NeurIPS code and data submission guidelines (\url{https://nips.cc/public/guides/CodeSubmissionPolicy}) for more details.
        \item While we encourage the release of code and data, we understand that this might not be possible, so “No” is an acceptable answer. Papers cannot be rejected simply for not including code, unless this is central to the contribution (e.g., for a new open-source benchmark).
        \item The instructions should contain the exact command and environment needed to run to reproduce the results. See the NeurIPS code and data submission guidelines (\url{https://nips.cc/public/guides/CodeSubmissionPolicy}) for more details.
        \item The authors should provide instructions on data access and preparation, including how to access the raw data, preprocessed data, intermediate data, and generated data, etc.
        \item The authors should provide scripts to reproduce all experimental results for the new proposed method and baselines. If only a subset of experiments are reproducible, they should state which ones are omitted from the script and why.
        \item At submission time, to preserve anonymity, the authors should release anonymized versions (if applicable).
        \item Providing as much information as possible in supplemental material (appended to the paper) is recommended, but including URLs to data and code is permitted.
    \end{itemize}

\item {\bf Experimental Setting/Details}
    \item[] Question: Does the paper specify all the training and test details (e.g., data splits, hyperparameters, how they were chosen, type of optimizer, etc.) necessary to understand the results?
    \item[] Answer: \answerNA{} %
    \item[] Justification: While we do have some model training results to assess metrics (and for such results all training and test details are fully described in this paper and the full set of parameters and code is publicly released), this study is not actually a modelling study, but rather a study of machine learning metrics, so our main contribution is not a modeling result that is directly dependent on released test/split/hyperparameter details.
    \item[] Guidelines:
    \begin{itemize}
        \item The answer NA means that the paper does not include experiments.
        \item The experimental setting should be presented in the core of the paper to a level of detail that is necessary to appreciate the results and make sense of them.
        \item The full details can be provided either with the code, in appendix, or as supplemental material.
    \end{itemize}

\item {\bf Experiment Statistical Significance}
    \item[] Question: Does the paper report error bars suitably and correctly defined or other appropriate information about the statistical significance of the experiments?
    \item[] Answer: \answerYes{} %
    \item[] Justification: We perform appropriate statistical significance tests and report them in this work.
    \item[] Guidelines:
    \begin{itemize}
        \item The answer NA means that the paper does not include experiments.
        \item The authors should answer "Yes" if the results are accompanied by error bars, confidence intervals, or statistical significance tests, at least for the experiments that support the main claims of the paper.
        \item The factors of variability that the error bars are capturing should be clearly stated (for example, train/test split, initialization, random drawing of some parameter, or overall run with given experimental conditions).
        \item The method for calculating the error bars should be explained (closed form formula, call to a library function, bootstrap, etc.)
        \item The assumptions made should be given (e.g., Normally distributed errors).
        \item It should be clear whether the error bar is the standard deviation or the standard error of the mean.
        \item It is OK to report 1-sigma error bars, but one should state it. The authors should preferably report a 2-sigma error bar than state that they have a 96\% CI, if the hypothesis of Normality of errors is not verified.
        \item For asymmetric distributions, the authors should be careful not to show in tables or figures symmetric error bars that would yield results that are out of range (e.g. negative error rates).
        \item If error bars are reported in tables or plots, The authors should explain in the text how they were calculated and reference the corresponding figures or tables in the text.
    \end{itemize}

\item {\bf Experiments Compute Resources}
    \item[] Question: For each experiment, does the paper provide sufficient information on the computer resources (type of compute workers, memory, time of execution) needed to reproduce the experiments?
    \item[] Answer: \answerYes{} %
    \item[] Justification: Our synthetic experiments can be replicated in a colab notebook with the provided Jupyter notebook file, and the real data experiments are adequately described with released code.
    \item[] Guidelines:
    \begin{itemize}
        \item The answer NA means that the paper does not include experiments.
        \item The paper should indicate the type of compute workers CPU or GPU, internal cluster, or cloud provider, including relevant memory and storage.
        \item The paper should provide the amount of compute required for each of the individual experimental runs as well as estimate the total compute. 
        \item The paper should disclose whether the full research project required more compute than the experiments reported in the paper (e.g., preliminary or failed experiments that didn't make it into the paper). 
    \end{itemize}
    
\item {\bf Code Of Ethics}
    \item[] Question: Does the research conducted in the paper conform, in every respect, with the NeurIPS Code of Ethics \url{https://neurips.cc/public/EthicsGuidelines}?
    \item[] Answer: \answerYes{} %
    \item[] Justification: We conform with the code of ethics.
    \item[] Guidelines:
    \begin{itemize}
        \item The answer NA means that the authors have not reviewed the NeurIPS Code of Ethics.
        \item If the authors answer No, they should explain the special circumstances that require a deviation from the Code of Ethics.
        \item The authors should make sure to preserve anonymity (e.g., if there is a special consideration due to laws or regulations in their jurisdiction).
    \end{itemize}

\item {\bf Broader Impacts}
    \item[] Question: Does the paper discuss both potential positive societal impacts and negative societal impacts of the work performed?
    \item[] Answer: \answerYes{} %
    \item[] Justification: Our work is about correcting a major misunderstanding in the ML community regarding a widely used evaluation metric, and the potential fairness implications of this misunderstanding. In that sense, our entire work is clearly about the potential societal impacts of this misunderstanding, and how it should be corrected. We also clearly discuss the limitations of our work in Section~\ref{sec:limitations}.
    \item[] Guidelines:
    \begin{itemize}
        \item The answer NA means that there is no societal impact of the work performed.
        \item If the authors answer NA or No, they should explain why their work has no societal impact or why the paper does not address societal impact.
        \item Examples of negative societal impacts include potential malicious or unintended uses (e.g., disinformation, generating fake profiles, surveillance), fairness considerations (e.g., deployment of technologies that could make decisions that unfairly impact specific groups), privacy considerations, and security considerations.
        \item The conference expects that many papers will be foundational research and not tied to particular applications, let alone deployments. However, if there is a direct path to any negative applications, the authors should point it out. For example, it is legitimate to point out that an improvement in the quality of generative models could be used to generate deepfakes for disinformation. On the other hand, it is not needed to point out that a generic algorithm for optimizing neural networks could enable people to train models that generate Deepfakes faster.
        \item The authors should consider possible harms that could arise when the technology is being used as intended and functioning correctly, harms that could arise when the technology is being used as intended but gives incorrect results, and harms following from (intentional or unintentional) misuse of the technology.
        \item If there are negative societal impacts, the authors could also discuss possible mitigation strategies (e.g., gated release of models, providing defenses in addition to attacks, mechanisms for monitoring misuse, mechanisms to monitor how a system learns from feedback over time, improving the efficiency and accessibility of ML).
    \end{itemize}
    
\item {\bf Safeguards}
    \item[] Question: Does the paper describe safeguards that have been put in place for responsible release of data or models that have a high risk for misuse (e.g., pretrained language models, image generators, or scraped datasets)?
    \item[] Answer: \answerNA{} %
    \item[] Justification: We do not release new data or models in this work.
    \item[] Guidelines:
    \begin{itemize}
        \item The answer NA means that the paper poses no such risks.
        \item Released models that have a high risk for misuse or dual-use should be released with necessary safeguards to allow for controlled use of the model, for example by requiring that users adhere to usage guidelines or restrictions to access the model or implementing safety filters. 
        \item Datasets that have been scraped from the Internet could pose safety risks. The authors should describe how they avoided releasing unsafe images.
        \item We recognize that providing effective safeguards is challenging, and many papers do not require this, but we encourage authors to take this into account and make a best faith effort.
    \end{itemize}

\item {\bf Licenses for existing assets}
    \item[] Question: Are the creators or original owners of assets (e.g., code, data, models), used in the paper, properly credited and are the license and terms of use explicitly mentioned and properly respected?
    \item[] Answer: \answerNA{} %
    \item[] Justification: Only public datasets, appropriately cited, are used in this work.
    \item[] Guidelines:
    \begin{itemize}
        \item The answer NA means that the paper does not use existing assets.
        \item The authors should cite the original paper that produced the code package or dataset.
        \item The authors should state which version of the asset is used and, if possible, include a URL.
        \item The name of the license (e.g., CC-BY 4.0) should be included for each asset.
        \item For scraped data from a particular source (e.g., website), the copyright and terms of service of that source should be provided.
        \item If assets are released, the license, copyright information, and terms of use in the package should be provided. For popular datasets, \url{paperswithcode.com/datasets} has curated licenses for some datasets. Their licensing guide can help determine the license of a dataset.
        \item For existing datasets that are re-packaged, both the original license and the license of the derived asset (if it has changed) should be provided.
        \item If this information is not available online, the authors are encouraged to reach out to the asset's creators.
    \end{itemize}

\item {\bf New Assets}
    \item[] Question: Are new assets introduced in the paper well documented and is the documentation provided alongside the assets?
    \item[] Answer: \answerNA{} %
    \item[] Justification: No new assets are released in this work.
    \item[] Guidelines:
    \begin{itemize}
        \item The answer NA means that the paper does not release new assets.
        \item Researchers should communicate the details of the dataset/code/model as part of their submissions via structured templates. This includes details about training, license, limitations, etc. 
        \item The paper should discuss whether and how consent was obtained from people whose asset is used.
        \item At submission time, remember to anonymize your assets (if applicable). You can either create an anonymized URL or include an anonymized zip file.
    \end{itemize}

\item {\bf Crowdsourcing and Research with Human Subjects}
    \item[] Question: For crowdsourcing experiments and research with human subjects, does the paper include the full text of instructions given to participants and screenshots, if applicable, as well as details about compensation (if any)? 
    \item[] Answer: \answerNA{} %
    \item[] Justification: No crowdsourcing experiments were done in this work, nor was any research with human subjects done. The manual annotation of reviewed papers in this work was solely performed by authors of this work.
    \item[] Guidelines:
    \begin{itemize}
        \item The answer NA means that the paper does not involve crowdsourcing nor research with human subjects.
        \item Including this information in the supplemental material is fine, but if the main contribution of the paper involves human subjects, then as much detail as possible should be included in the main paper. 
        \item According to the NeurIPS Code of Ethics, workers involved in data collection, curation, or other labor should be paid at least the minimum wage in the country of the data collector. 
    \end{itemize}

\item {\bf Institutional Review Board (IRB) Approvals or Equivalent for Research with Human Subjects}
    \item[] Question: Does the paper describe potential risks incurred by study participants, whether such risks were disclosed to the subjects, and whether Institutional Review Board (IRB) approvals (or an equivalent approval/review based on the requirements of your country or institution) were obtained?
    \item[] Answer: \answerNA{} %
    \item[] Justification: No human subjects research was performed in this work.
    \item[] Guidelines:
    \begin{itemize}
        \item The answer NA means that the paper does not involve crowdsourcing nor research with human subjects.
        \item Depending on the country in which research is conducted, IRB approval (or equivalent) may be required for any human subjects research. If you obtained IRB approval, you should clearly state this in the paper. 
        \item We recognize that the procedures for this may vary significantly between institutions and locations, and we expect authors to adhere to the NeurIPS Code of Ethics and the guidelines for their institution. 
        \item For initial submissions, do not include any information that would break anonymity (if applicable), such as the institution conducting the review.
    \end{itemize}

\end{enumerate}

\end{document}